\documentclass[letterpaper,11pt]{article}
\usepackage[utf8]{inputenc}%(only for the pdftex engine)

\usepackage{fullpage}  
\usepackage{amsthm}
\usepackage{amsmath,amssymb, mathtools}
\usepackage{newtxmath}
\usepackage{framed,caption,color,url}
\usepackage{times}
\usepackage{algorithm2e,algorithmicx,algpseudocode,tikz,wrapfig}
\usepackage{graphicx}               % Necessary to use \scalebox
\usepackage{subfigure}
\usepackage{multirow}
\usepackage{nicefrac} % with \nicefrac coomant one can produce dioginal divigin which looks better than a/b
\usepackage{paralist} % use \begin{inparaenum} or \begin{asparaenum} (or inparaitem or asparaitem) to enumerate inside a paragraph. use compactitem and compactenum to get them compact
\usepackage{boxedminipage} % to have miniboxes to emphasize something
\usepackage{upgreek}
\usepackage{multicol}
\usepackage{enumitem,comment}
\usepackage[pdfstartview=FitH,colorlinks,linkcolor=blue,filecolor=blue,citecolor=blue,urlcolor=blue,pagebackref=true]{hyperref}
\usepackage[capitalize]{cleveref} % Can say \cref{thm:label} to get Theorem~\ref{thm:label}. Needs to be the last package included.

%\newcommand{\ifcompact}{\iftrue}
%\ifcompact
%\newcommand{\paragraph}[1]{\medskip \noindent {\bf #1}}
%\fi

%\newcommand{\DataApp}{\mathsf{Data}\text{-}\mathsf{App}}
\newcommand{\DInfWith}{\mathsf{Del}\text{-}\mathsf{Inf}\text{-}\mathsf{Exm}}
\newcommand{\DInfNo}{\mathsf{Del}\text{-}\mathsf{Inf}\text{-}\mathsf{Ins}}
\newcommand{\DelRecon}{\mathsf{Del}\text{-}\mathsf{Ins}\text{-}\mathsf{Rec}}

\newcommand{\LabelApp}{\mathsf{Ins}\text{-}\mathsf{Rev}\text{-}\mathsf{Lbl}\text{-}\mathsf{Rec}}
\newcommand{\LabelInf}{\mathsf{Del}\text{-}\mathsf{Lbl}\text{-}\mathsf{Rec}}
\usepackage{centernot}

\newcommand{\Learn}{\mathsf{Learn}}
\newcommand{\ERM}{\mathsf{ERM}}

\newcommand{\Add}{\mathsf{Add}}
\newcommand{\Del}{\mathsf{Del}}
\newcommand{\Eval}{\mathsf{Eval}}

\newcommand{\DataCol}{\mathsf{DatCol}}
\newcommand{\DelReq}{\mathsf{DelReq}}
\newcommand{\Env}{\mathsf{Env}}
\newcommand{\Records}{\mathcal{U}}

\newcommand{\real}{\mathsf{real}}
\newcommand{\ideal}{\mathsf{ideal}}

\newcommand{\Risk}{\mathsf{Risk}}

\newcommand{\aSF}{\mathsf{a}}
\newcommand{\gSF}{\mathsf{g}}
\newcommand{\hSF}{\mathsf{h}}

\newcommand{\avr}[2]{\ifthenelse{\equal{#2}{}}{\aSF({#1})}{\ifthenelse{\equal{#2}{0}}{\aSF(\emptyset)}{\aSF({#1}_{\leq #2})}}}

\newcommand{\avrMax}[2]{\ifthenelse{\equal{#2}{}}{\aSF^*({#1})}{\ifthenelse{\equal{#2}{0}}{\aSF^*(\emptyset)}{\aSF^*({#1}_{\leq #2})}}}

\newcommand{\avrApp}[2]{\ifthenelse{\equal{#2}{}}{\tilde{\aSF}({#1})}{\ifthenelse{\equal{#2}{0}}{\tilde{\aSF}(\emptyset)}{\tilde{\aSF}({#1}_{\leq #2})}}}

\newcommand{\avrAppMax}[2]{\ifthenelse{\equal{#2}{}}{\tilde{\aSF}^*({#1})}{\ifthenelse{\equal{#2}{0}}{\tilde{\aSF}^*(\emptyset)}{\tilde{\aSF}^*({#1}_{\leq #2})}}}

\newcommand{\ArgMax}[2]{\ifthenelse{\equal{#2}{}}{\hSF({#1})}{\ifthenelse{\equal{#2}{0}}{\hSF(\emptyset)}{\hSF({#1}_{\leq #2})}}}

\newcommand{\AppArgMax}[2]{\ifthenelse{\equal{#2}{}}{\tilde{\hSF}({#1})}{\ifthenelse{\equal{#2}{0}}{\tilde{\hSF}(\emptyset)}{\tilde{\hSF}({#1}_{\leq #2})}}}

\newcommand{\gain}[2]{\ifthenelse{\equal{#2}{}}{\gSF(#1)}{\gSF(#1_{\leq #2})}}
\newcommand{\gainMax}[2]{\ifthenelse{\equal{#2}{}}{\gSF^*(#1)}{\gSF^*(#1_{\leq #2})}}
\newcommand{\gainApp}[2]{\ifthenelse{\equal{#2}{}}{\tilde{\gSF}(#1)}{\tilde{\gSF}(#1_{\leq #2})}}
\newcommand{\gainAppMax}[2]{\ifthenelse{\equal{#2}{}}{\tilde{\gSF}^*(#1)}{\tilde{\gSF}^*(#1_{\leq #2})}}

%\newcommand{\avGain}(3]{{\mathcal{G}}_{#3}(#1_{\leq #2 -1}]}
%\newcommand{\avAbGain}(3]{\mathcal{A}_{#3}(#1_{\leq #2 -1}]}
%\newcommand{\avSqGain}(3]{\mathcal{Q}_{#3}(#1_{\leq #2 -1}]}

%\newcommand{\RTamd}{\mathsf{RSam}_\delta}

% \newcommand{\pr}[2][]{\Pr_{\ifthenelse{\isempty{#1}}{}{{#1}}}\left[{#2}\right]}
% Replacing the \pr command for now because I don't understand how to use the above.

%\newcommand{\sup}{\operatorname{sup}}

%\newcommand{\Tam}{\Tam}

\newcommand{\dis}{\mathsf{dis}}
% Oracle subroutines

% Garbling and RE

% Algorithms

% Events Notation

% Query Set Notation

%\newcommand{\Samp}{\mathsf{OnSam}}

%\newcommand{\Att}{\textsc{Attack}}

\newcommand{\sm}{\setminus}

\newcommand{\Chal}{\mathsf{Chal}}

 % bad event
\newcommand{\model}{h}
\newcommand{\modelD}[1]{\model_{-#1}}
\newcommand{\modelDel}{\model_{\mathsf{del}}}

% keys mathsf

\newcommand{\loss}{\ell}
% algorithms mahtbf

% \newcommand{\G}{\mathbf{G}}

% sets

%\newcommand{\att}{\cA}
%\newcommand{\prd}{\cP}
\usepackage{graphicx}

%%%%%%%%%%%%%%%%%%%%%%%%%%%%%%%%%%%%%%%%%%%%%%%%%%%55
\newcommand{\remove}[1]{}

 % empty set
\newcommand{\se}{\subseteq}

 % Random Variable

%%%  English %%%%%%%%%%%%%%%%%%%%%%%%%%%%%%%%%%%%%%%%%%%%%%%%%%%%%%

%%%  math %%%%%%%%%%%%%%%%%%%%%%%%%%%%%%%%%%%%%%%%%%%%%%%%%%%%%%

\newcommand{\set}[1]{\{ #1 \}}

 % Adjusted Round

\newcommand{\bits}{\{0,1\}}

%\newcommand{\norm}[1]{\abf{#1}}

%%%% Font Abbreviations

\newcommand{\R}{{\mathbb R}}

\newcommand{\Adv}{\mathsf{Adv}}

\newcommand{\cC}{{\mathcal C}}
\newcommand{\cD}{{\mathcal D}}

\newcommand{\cH}{{\mathcal H}}

\newcommand{\cS}{{\mathcal S}}
\newcommand{\cT}{{\mathcal T}}

\newcommand{\cX}{{\mathcal X}}
\newcommand{\cY}{{\mathcal Y}}

\newcommand{\bfe}{\mathbf{e}}

\newcommand{\bfx}{\mathbf{x}}

\usepackage{upgreek}
\usepackage{bm}

% note the short forms: eps and ups

% not to confuse with regular o

\newcommand{\eps}{\varepsilon}
%\newcommand{\e}{\epsilon}

%\newcommand{\vare}{\varepsilon}

%%%%%%%%%%%%%%%% Operators %%%%%%

\newcommand{\Exp}{\operatorname*{\mathbb{E}}}
\newcommand{\Ex}{\Exp}

\newcommand{\Supp}{\operatorname{Supp}}

\newcommand{\argmax}{\operatorname*{argmax}}
\newcommand{\argmin}{\operatorname*{argmin}}

%%% Computational Problems %%%%%%%%%%%%%%%%%%%%%%%%%%%%%%%%%%%%%%%

%\newcommand{\ED}{\mathrm{ED}} % Entropy Difference
%\newcommand{\GapSVP}{\mathrm{GapSVP}}
%\newcommand{\GapSIVP}{\mathrm{GapSIVP}}
%\newcommand{\GapCVP}{\mathrm{GapCVP}}
%\newcommand{\COLOR}{\mathrm{3COL}}

%%% Complexity Classes %%%%%%%%%%%%%%%%%%%%%%%%%%%%%%%%%%%%%%%%%%

 %  their complements

%\newcommand\prP{\prclass{P}}

%\newcommand{\E}{\class{E}}
\newcommand{\one}{\vmathbb{1}}

% relations decidable by a randomized algorithm

% relations decidable by a RTFAM oracle

% zero-knowledge complexity classes

%%%%% Theorem environments
%%%%%%%%%%%%%%%%%%%%%%%%%%%%%%%%%%%%%%%%%%%%%%%%%%%

%\ifllncs

%\fi

\newtheorem{theorem}{Theorem}[section]

\theoremstyle{plain}

\newtheorem{lemma}[theorem]{Lemma}

\theoremstyle{definition}

\newtheorem{definition/}[theorem]{Definition}
\newenvironment{definition}
  {%
   \pushQED{\qed}\begin{definition/}}
  {\popQED\end{definition/}}

\newtheorem{alg/}[theorem]{Algorithm}

\newenvironment{alg}
  {%
   \pushQED{\qed}\begin{alg/}}
  {\popQED\end{alg/}}

\theoremstyle{definition}
\newtheorem{remark}[theorem]{Remark}
% \newtheorem{note}[theorem]{Note}

% Protocol env
\newcommand{\sdotfill}{\textcolor[rgb]{0.8,0.8,0.8}{\dotfill}} %change to \cdotfill later

\makeatletter
\def\th@protocol{%
    \normalfont % body font
    \setbeamercolor{block title example}{bg=orange,fg=white}
    \setbeamercolor{block body example}{bg=orange!20,fg=black}
    \def\inserttheoremblockenv{exampleblock}
  }
\makeatother
\theoremstyle{protocol}

\newtheorem{proto}[theorem]{Protocol}
\newtheorem{protoc}[theorem]{Protocol}

% state/restate
\newcommand{\namedref}[2]{#1~\ref{#2}}

\newcommand{\torestate}[3]{%
\expandafter \def \csname BBRESTATE #2 \endcsname{#3}
\theoremstyle{plain}
\newtheorem{BBRESTATETHMNUM#2}[theorem]{#1}
\begin{BBRESTATETHMNUM#2}\label{#2}\csname BBRESTATE #2 \endcsname   \end{BBRESTATETHMNUM#2}
\newtheorem*{BBRESTATETHMNONNUM#2}{\namedref{#1}{#2}}
}

\newcommand{\restate}[1]{\begin{BBRESTATETHMNONNUM#1}[Restated] \csname BBRESTATE #1 \endcsname
\end{BBRESTATETHMNONNUM#1}}

%%% refs macros (hopefully will help in transforming to pdf) %%%%%%

% More shorthand

% From GGV
\newcommand{\state}         {\mathnormal{state}} 
\newcommand{\view}          {\mathnormal{view}}

%\title{Data Leakage and Deletion Compliance in Machine Unlearning}
\title{Deletion Inference, Reconstruction, and Compliance  in \\ Machine (Un)Learning \footnote{This is the full version of a paper appearing in the 22nd Privacy Enhancing Technologies Symposium (PETS 2022).}}
\author{Ji Gao\thanks{University of Virginia, \href{mailto:jg6yd@virginia.edu}{jg6yd@virginia.edu}.} \and Sanjam Garg\thanks{University of California, Berkeley and NTT Research,  \href{mailto:sanjamg@berkeley.edu}{sanjamg@berkeley.edu}.  Supported in part by DARPA under Agreement No. HR00112020026, AFOSR Award FA9550-19-1-0200, NSF CNS Award 1936826, and research grants by the Sloan Foundation, and Visa Inc. Any opinions, findings and conclusions or recommendations expressed in this material are those of the author(s) and do not necessarily reflect the views of the United States Government or DARPA.} \and Mohammad Mahmoody\thanks{University of Virginia, \href{mailto:mohammad@virginia.edu}{mohammad@virginia.edu}. Supported by NSF grants CCF-1910681 and CNS1936799.} \and Prashant Nalini Vasudevan\thanks{National University of Singapore, \href{mailto:prashant@comp.nus.edu.sg}{prashant@comp.nus.edu.sg}. Supported by funds from an NUS Presidential Young Professorship. Part of this work was done as a postdoctoral researcher at UC Berkeley supported by Sanjam Garg's funds listed above and the UC Berkeley Center for Long-Term Cybersecurity.}
}

\newcommand{\ifcomments}{\iftrue}

\ifcomments
\newcommand{\Mnote}[1]{{\footnotesize \color{teal} [\bf {Mohammad:}  #1]}}
\newcommand{\sanjam}[1]{{\footnotesize \color{red} [\bf {Sanjam:}  #1]}}
\newcommand{\Jnote}[1]{{\footnotesize \color{red} [\bf {Ji:}  #1]}}
\newcommand{\Pnote}[1]{{\footnotesize {\color{red} [{Prashant:}  #1]}}}

\else
\newcommand{\Mnote}[1]{}
\newcommand{\sanjam}[1]{}
\newcommand{\Jnote}[1]{}
\newcommand{\Pnote}[1]{}
\fi

\begin{document}

\maketitle

  \begin{abstract}
{
Privacy attacks on machine learning models aim to identify the data that is used to train such models. Such attacks, traditionally, are studied on \emph{static} models that are trained once and are accessible by the adversary.
Motivated to meet new legal requirements, many machine learning methods are recently extended to support machine \emph{unlearning}, i.e., updating models as if certain examples are removed from their training sets, and meet new legal requirements. However, privacy   attacks could  potentially become  more devastating in this new setting, since an attacker could now access both the original model before deletion and  the new model after the deletion. In fact, the very act of deletion might make the deleted record \emph{more} vulnerable to privacy attacks.
%In fact, the 
%operation to identify the deleted examples from the training set. 
%when machine learning methods are upgraded to support \emph{unlearning}, i.e., updating models as if certain examples are removed from their training sets, and meet new legal requirements. For example, in this case, an attacker could access both the original model and  the model after the deletion operation to identify the deleted examples from the training set. 
%

Inspired by cryptographic definitions and the differential privacy framework, we  \emph{formally} study privacy implications of machine unlearning.
We formalize (various forms of)  \emph{deletion inference} and \emph{deletion reconstruction} attacks, in which the adversary aims to either identify which record is deleted or to reconstruct (perhaps part of) the    deleted records.
%Our attack models are reminiscent of \emph{membership inference} and \emph{data reconstruction} attacks in private data analysis but are specifically tailored for the dynamic setting of deletion updates.  
We then  present successful {deletion inference} and  reconstruction attacks for a variety of machine learning models and tasks such as classification, regression, and language models.
%, based on their   vulnerability of memorizing their data. 
%For example, we show that deleted examples from nearest neighbor models over the Omniglot data set can be reconstructed in a visually recognizable way. As another example, we show that deleted sentences from trigram models over Penn Treebank  data set can be almost always reconstructed perfectly.
%\sanjam{It would be great to add a line about the most devastating attack we can launch...For example, in the case of a language model, we show a devastating attack that uses the orginal and post-deletion models to recover deleted sentences.}
%
%
Finally, we show that our attacks would provably be precluded if the schemes satisfy (variants of)    
Deletion Compliance (Garg, Goldwasser, and Vasudevan, Eurocrypt'20).
}
\end{abstract}
%Keywords: Measure concentration, algorithmic proofs,  coin tossing protocols, robust learning.

\remove{
\newcommand{\chaptermark}[1]{\markright{\textsf{CH.}\hspace{1em} {\small}.#1}{}}
\makeatletter
\newcommand{\@makeschapterhead}[1]{}
\makeatother
}
\newpage
\tableofcontents
 
% \thispagestyle{empty}

% \clearpage

%\thispagestyle{empty}
%\clearpage
%\setcounter{page}{1}

% !TeX root = main_workshop.tex

\section{Introduction} \label{sec:intro}
Machine learning algorithms, in their most basic settings, focus on deriving predictive models with low error by using a collection of training examples $\cS=\set{e_1,\dots,e_n}$.
%Towards this goal, (supervised) learning algorithms are designed to minimize the error of predicting the correct label on new test instances.
However, a model $\model_\cS$ trained on set $\cS$ might reveal (sensitive information about) the examples in $\cS$, potentially violating the privacy of the individuals whose contributed those examples. Such exposure, particularly in certain  (e.g., medical/political) contexts could be a major concern. In fact, the ever-increasing use of machine learning (ML) as a service~\cite{ribeiro2015mlaas}  for decision making  further heightens such privacy concerns. Recent legal requirements (e.g., the European Union's GDPR~\cite{hoofnagle2019european} or California's CCPA~\cite{de2018guide}) aim to make such privacy considerations mandatory. At the same time, a recent line of work~\cite{VBE18,CN19,nissim2017bridging,GGV20} aims at (mathematically) formalizing such privacy considerations and their enforcement.
%is the subject of ongoing study~\cite{VBE18,CN19,nissim2017bridging,GGV20}. 
%\Mnote{can we cite some papers from law/algorithms direction, e.g., from the Simon's workshop here?} \Pnote{Added some citations.} \Mnote{thanks!}

%the necessarily of such considerations are a direct result of similar privacy concerns that need to be addressed, and are aimed to enforce solutions that protect users' privacy.

The work of Shokri et al.~\cite{shokri2017membership} demonstrated that natural and even commercialized ML models do, in fact, leak a lot about their training sets. In particular, their work initiated the \emph{membership inference} framework for studying privacy attacks on ML models. In such  attacks, an adversary with input example $e$ and access to an ML model $\model_\cS$ aims to deduce if  $e \in \cS$ or not. In a bigger picture, membership inference of \cite{shokri2017membership} and many follow-up attacks~\cite{long2017towards,salem2019ml,long2018understanding,yang2019neural,choo2020label,li2020label,zanella-bguelin2020analyzing,salem2020updates,jayaraman2020revisiting} as well as \emph{model inversion} attacks \cite{fredrikson2014privacy,fredrikson2015model,wu2016methodology,VBE18} can all be seen as demonstrating ways to \emph{infer} or \emph{reconstruct} information about the data sets used in the ML pipeline
based on publicly available auxiliary information about them 
\cite{dinur2003revealing,dwork2017exposed,backes2016membership,dwork2015robust,sankararaman2009genomic,homer2008resolving}. A more recent line of work studies the related question of ``memorization'' in machine learning models  set~\cite{song2017machine,VBE18,carlini2019secret,feldman2020does}.

On the defense side, differential privacy~\cite{dinur2003revealing,TCC:DMNS06,dwork2008differential}  provides a framework to provably limit the information that would leak about the used training examples. This is done by guaranteeing that including or not including any individual example will have little statistical impact on the distribution of the produced ML model.  Consequently, any form of interaction with the trained model $\model$ (e.g., even a full  disclosure of it) will  not reveal too much information about whether a particular example $e$ was a member of the data set or not. Despite being a very powerful privacy guarantee, differential privacy imposes a  challenge on the learning process~\cite{song2013stochastic,dwork2014analyze,bassily2014private,dandekar2018differential,sheffet2019old,talwar2015nearly} that usually leads to major utility loss when one uses the same amount of training data compared with non-private training % \Snote{changed language a bit..check} \Mnote{looks good}
\cite{bassily2014private,beimel2014bounds}. 
%This still leaves open the question of what level of privacy can be achieved using more efficient  methods. 
% \Snote{commented last line. seems irrelevant} \Mnote{sure}
Hence, it is important to  understand the level of privacy  that can be achieved by more efficient  methods as well.

\paragraph{Privacy  in the presence of data deletion.} The above mentioned attacks  are executed in a \emph{static}   setting, in which the model is trained once and then the  adversary tries to extract information about the training set by interacting with the trained model afterwards. However, this setting is not   realistic when models are dynamic and get updated.
%On the other hand, if an ML model gets updated due to a deletion/unlearning request, the users, potentially, have been exposed to \emph{two} versions of the model.
%Therefore, when we are in settings that allow unlearning, we need to revisit all the privacy attacks and defenses. 
%
In particular, in light of the recent attention to the ``right to erasure'' or the ``right to be forgotten,''  also stressed by legal requirements such as GDPR and CCPA, a new line of work has emerged with the goal of \emph{unlearning} or simply \emph{deleting} examples from  machine learning models~\cite{cao2015towards,ginart2019making,golatkar2019eternal,GGV20,bourtoule2019machine,izzo2020approximate,guo2019certified,neel2020descent}. In this setting, upon a deletion request for an example $e \in \cS$, the trainer needs to update the model $\model_\cS$ to $\modelD{e}$ such that $\modelD{e}$  (ideally) has the same distribution as training a model from  scratch using $\cS \sm \set{e}$. 
%\Pnote{Is this paragraph not finished?} \Mnote{I changed it.}
%Going back to our privacy questions, observe that 
Clearly, if an ML model gets updated due to a deletion request, we are no longer dealing with a static  ML model.

%Consider the process of perfectly deleting example $e$ from the data set $\cS$ that was used to construct a model $\model$ as described above:  obtain  $\modelDel$ by re-training using the smaller data set $\cS \sm \set{e}$.

It might initially seem like a perfect deletion of a example $e$  from a model $\model_\cS$ and releasing $\modelD{e}$ instead  should  \emph{help} with preventing   leakage about the particular deleted  example $e$. After all, we are  \emph{removing} $e$ from the learning process of the model accessible to the adversary. However, the adversary now could potentially access  \emph{both} models $\model_\cS$ and $\modelD{e}$, and so it might be able to extract even \emph{more} information about the deleted example $e$ compared to the setting in which the adversary could only access $\model_\cS$ or $\modelD{e}$ alone. As a simplified   contrived example, suppose the examples $\cS=\set{e_1,\dots,e_n}$ are real-valued vectors, and suppose   the ML model $\model_\cS$ (perhaps upon many queries) somehow reveals the summation $\sum_{i \in [n]} e_i$. In this case, if the set $\cS$ is sampled from a distribution  with sufficient entropy, the trained model $\model_\cS$ might potentially provide a certain degree of privacy for examples in $\cS$. However, if one of the examples $e_i$ is deleted from $\model_\cS$, then because the updated model $\modelD{e_i}$ also returns the updated summation $\sum_{j \neq i} e_j$, then an adversary who extracts both of these summations can reconstruct the deleted record $e_i$ completely. In other words, the very task of deletion might in fact \emph{harm} the privacy of the very deleted example $e_i$.  Hence, in this work we ask:
 How vulnerable are   ML algorithms to leak information about the  deleted examples, if an adversary gets to interact with the models both before and after the deletion updates?

%\Snote{should the focus be on deletion more rather than updates?} \Mnote{I changed it.}
%We call the first type of attacks above a \textbf{deletion inference} attack and call the second type a \textbf{label inference} attack. The main questions studied in this work focus exactly on the above type of attacks. We ask, to what extend do various machine learning methods prevent such inference attacks.

\subsection{Our Contribution} 
%To understand the privacy implications of machine unlearning, we revisit privacy attacks such as membership inference and data reconstruction attacks on ML models and study their power and limitations in the new setting where access to both $\model$ and $\modelDel$ is provided to the adversary. %The only constraint,  is that $\model$ can only be queried before $\modelDel $ becomes available. 
In this work, we formally study the privacy implications of machine unlearning. Our approach is inspired by cryptographic definitions, differential privacy, and deletion compliance framework of \cite{GGV20}.
More specifically, our contribution is two-fold. First, we initiate a formal study of various attack models in the two categories  of  \emph{reconstruction} and \emph{inference} attacks.   Second, we present practical, simple, yet effective attacks on a \emph{broad} class of machine learning algorithms for classification, regression, and text generation that extract information about the deleted example. %We then take a step back and study the definitions that, when enforced, could remedy the situation and prevent similar attacks from succeeding.

Below, we briefly go over new definitions, the relation between them, and the ideas behind our attacks. In what follows, $\model_\cS$ is the model trained on the set $\cS$, and $\modelD{e}$ is the model after deletion of the example $e \in \cS$. When the context is clear, we might simply use $\model$ to denote $\model_\cS$ and $\modelDel$ to denote the model after deletion\footnote{Using $\modelDel$ is particularly useful when we want to refer to the model after deletion, without explicitly revealing the deleted example $e$.}. We assume that the deletion is \emph{ideal}, in the sense that $\modelD{e}$ is obtained by a fresh retrain on $\cS \sm \set{e}$.\footnote{We suspect our attacks should have a good success rate on ``approximate'' deletion procedures (in which $\modelD{e}$ is just close to the ideal version) as well. We leave such studies for future work.} The adversary will have access to $\model_\cS$ followed by access to $\modelD{e}$.

%\paragraph{Attacks extracting leakage about deleted points.} We study three types of attacks (in which the adversary accesses both $\model,\modelDel$) as follows.
%
%
%In this work, we revisit privacy attacks in a setting where ML models get updated. In particular, we study the privacy implications of data deletion, and ask to what extent such methods actually preserve the privacy of the data owners.
%\Snote{The first two lines are repetitive. We only have 4 pages. Removed it! Reworded below a bit. Check} \Mnote{changed it-- also added an important setence at the end of the parag.}
%
%
%In this work, we exactly do this and study three types of attacks in this  setting of machine unlearning. 
%In each case, we propose new attacks that leverage access to the ML models before and after deletion and show through experiments that our attacks achieve very high success rates. In each case, we also explore and explain the theoretical intuition enabling our attacks.

\paragraph{Deletion inference.} Perhaps the most natural question about data leakage in the context of machine unlearning is whether deletion can be inferred. In \emph{membership} inference attack, the job of the adversary is to infer whether an example $e$ is a member of the used training set $\cS $ or not by interacting with the produced model $\model_\cS$. In this work, we introduce \emph{deletion inference} attacks which are, roughly speaking, analogous to \emph{membership inference} but in the context where some deletion is happening. More specifically, our definition does not capture whether the deletion is happening or not, and our goal (in the main default definition) is only to hide \emph{which} examples are being deleted. In particular, we formalize the goal of a deletion inference adversary to distinguish between a data example $e \in \cS$ that was deleted from an ML model $\model_\cS$ and another example $e' \in \cS$ (or $e' \notin \cS)$ that is not deleted from $\cS$. We follow the cryptographic game-based style of security definitions. (See Definition~\ref{def:WeakDelPriv} for the formal definition.)
    
Given examples $e_0,e_1$, with the promise that one of them is deleted and the other is not,  one can always reduce the goal of a deletion inference adversary to   \emph{membership inference} by first inferring membership of $e_0,e_1$ in the two models $\model,\modelDel$.  However, given that the adversary has access to both of $\model,\modelDel$, it is reasonable to suspect that much more can be done by a deletion inference adversary than what can be done through a reduction to membership inference. In fact, this is exactly what we show in Section~\ref{sec:compare}.
We show that when both models $\model,\modelDel$ can be accessed, relatively simple attacks can be designed to distinguish the deleted examples from the other examples  by relying on the intuition that a useful model is usually more fit to the training data than to other data. %Namely, if $e \in \cS$, then the loss of $\model$ on $e$ is perhaps smaller than the loss of $\model$ on other examples. Therefore, when an example $e$ is deleted from $\model$, it typically leads to a \emph{significant jump} for the loss. Hence, we use the \emph{loss} increase to infer the deleted points. 
In Section~\ref{sec:del-infer}, we show the power of such attacks on a variety of models and real world data sets   for both regression and classification. In each case, we both study deletion inference adversaries who know the full labeled examples $e_0,e_1$ (and infer which one of them are deleted) as well as stronger attackers who only know the (unlabeled) \emph{instances} $x_0,x_1$. 

\paragraph{Deletion reconstruction.}
The second category of our attacks focus on \emph{reconstructing} part or all of the deleted example $e$. As anticipated, reconstruction attacks are \emph{stronger} (and hence harder to achieve) attacks that can be used for obtaining deletion inference attacks as well (see Theorem~\ref{thm:fromRecToInf}). In all of our reconstruction attacks, the adversary is not given any explicit examples, and its goal is to extract information about   the features of the deleted instance. We now describe some special cases of reconstruction attacks that we particularly study.
\begin{itemize}
\item \textbf{Deleted instance reconstruction.} Can an adversary fully or  approximate find  the features of a deleted instance $x$ (where $e=(x,y)$ is the deleted example)? We show that for natural data distributions (both theoretical   and real data) the 1-nearest neighbor classifier can completely reveal the deleted instance, even if the adversary has only black-box access to the models before and after deletion. In particular, we show that when the instances are uniformly distributed over $\bits^d$,
%\Mnote{we need to get away from the Singleton case.} 
and the model is the 1-nearest neighbor model, an adversary can extract virtually \emph{all} of the features of the deleted instance (see Section~\ref{sec:Singleton}). 
% Further, we extend the above attack to recover a \emph{cluster center} of deleted instances, when multiple instances from the same cluster are deleted (see Section~\ref{sec:cluster}). 
%
We also present attacks on real data for two major application settings: image classification and text generation.

\begin{itemize}
\item \textbf{Deleted image   reconstruction.} We show similar attacks on 1-nearest neighbor over the   Omniglot dataset, where the job of the adversary is to extract \emph{visually similar} pictures to the deleted ones (see Section~\ref{sec:visual}).
\item \textbf{Deleted sentence reconstruction.} We then  study deletion reconstruction attacks on language models. Here, a language model gets updated to remove an input (e.g., a sentence) $e$, and the job of the adversary is to find useful information about $e$. We show that for simple language models such as bigram or trigram models, the adversary can  extract $e$ completely. 
\end{itemize}

\item \textbf{Deleted label reconstruction.} Suppose we deal with a classification problem. For a deleted example $(x,y)=e$, can an adversary \emph{who does not know the instance  $x$} infer any information about the label of the deleted point? We show that this is indeed possible with a simple idea when the data set is not too large. In particular, the deletion of a point with label $c$ reduces the probability that the new model outputs label $c$ in general, and using this idea we give simple yet successful attacks.
Now, suppose the adversary is somehow \emph{aware} of the instance $x$  of a deleted example $(x,y)=e$. Can the adversary leverage knowing the instance $x$ to learn \emph{more information} about the label $y$, than each of the models $\model,\modelDel$ alone provide?  %
We show that doing so is possible for linear regression. In particular, we show an attack using which one can extrapolate a deleted point's label to a \emph{higher} precision than what is  provided through the original model $\model$ or the model after deletion $\modelDel$. (See Section~\ref{sec:approx}.)

\end{itemize}

\paragraph{Weak deletion compliance.} The above results all deal with first defining \emph{attack models} and then presenting attacks within those frameworks. Next, we ask if it is possible to realize  machine learning algorithms with deletion mechanisms that offer meaningful notions of privacy for the deleted points. We approach this question through the lens of the  recent work of Garg et al.~\cite{GGV20} in which they provide a general ``deletion complience'' framework that provides strong definitions of private data deletion. We first give a formal comparison between the framework of~\cite{GGV20} with our  attack models and show that the deletion compliance framework of~\cite{GGV20} indeed captures all of the above-mentioned attack models. Furthermore, we also present a \emph{weakened} variant of the definition of~\cite{GGV20} that is adapted to a setting where the fact that deletion happened itself is allowed to leak. We believe this is a natural setting that needs special attention. For example, consider a text with redacted parts; this reveals the fact that deletion has happened, but not necessarily the redacted text. We further weaken the framework of~\cite{GGV20} by only revealing to the adversary what can be accessed through \emph{black-box access} to the model and \emph{not} the full state of the model.  We show that even such weaker variants of deletion compliance still capture all of our attacks, and hence is sufficient for positive results. This means that, as shown by \cite{GGV20}, differential privacy (with strong parameters) can be used to prevent all attacks of our paper. However, note that enforcing differential privacy comes with costs in efficiency and sample complexity. Hence, it remains an interesting direction to find more efficient schemes (both in terms of running time and sample complexity) that satisfy our weaker notions of deletion compliance introduced in this work.  See Section~\ref{sec:GGV} for more discussions.
%This shows that as proved in ~\cite{GGV20}, differential privacy (with sufficiently small   Next we show that this variant of deletion compliance (based on the original definition of~\cite{GGV20}) can be realized using (even a \emph{special} case of)  differential privacy under continual observation~\cite{DNPR10}. 

\paragraph{Motivation behind the attacks.} At a high level, our work is relevant in any context in which (1) the users who provide the data examples care about their privacy and prefer not to reveal their participation in the data set $\cS$ (2) the system aims to provide the deletion operation, perhaps due to legal requirements. Condition (1)   essentially holds in any scenario in which membership inference constitutes a legitimate threat. In scenarios where conditions (1) and (2) hold, if the adversary maintains continuous access to the machine learning model (e.g., when the model is provided as public service) then all the attacks studied in this paper are relevant to practice and would model  different adversarial power. 

Our security games model attacks in which the adversary aims to infer (or reconstruct) deletion of a \emph{random} example from a dataset. Real world adversaries are stronger in the sense that they could have a specific target in mind before making their queries to the online model. Moreover, real world adversaries usually have a lot of auxiliary information (e.g., as those exploited in the attacks on privacy on users in the Netflix challenge \cite{narayanan2006break}) while our attackers have a minimal knowledge about the distribution from which the data is sampled.

Having a diverse set of security games and attacks is analogous to having many different security games and notions in cryptography (such as CPA and CCA security for encryption) to model different attack scenarios. Informally speaking, and at a very high level, one can also think of the very strong deletion compliance of \cite{GGV20} as ``UC security'' \cite{FOCS:Canetti01}, while our other security notions model weaker security criteria.
%\Mnote{we got to keep parts of the paragraph below} \paragraph{Deletion compliance.} Recently, Garg, Goldwasser and Vasudevan~\cite{GGV20} proposed a framework for provably limiting the privacy side affects of the  deletion mechanisms. Their framework allows modeling situations like the one above where the adversary may have some access to the system both before and after deletion. They further define, through a definition inspired by the cryptographic real-ideal paradigm, what it means to be  \emph{deletion-compliant}. Roughly speaking, \cite{GGV20} defined deletion compliance based on the principle that the  world after deletion should look the same as (or similar to) a hypothetical world where the deleted data had never been part of the system to begin with. Crucially, the state of the world in their framework includes not just the memory of the system, but also the history of the system's communication with other users both before and after deletion, which means that just perfect deletion by retraining may not be sufficient. Looking ahead, the  attacks that we present clearly preclude this property.

\subsection{Related Work}

% The recent interesting works of Salem et al. \cite{salem2020updates} and Chen et al. \cite{chen2020machine} also study  privacy of ML models under updates.  The work of \cite{salem2020updates} focuses on online learning where the model updates add content (rather than removing them) and uses generative models to design their reconstruction attacks. The more recent work of \cite{chen2020machine}, like this work, focuses on deletion updates (i.e., machine unlearning) and is focused on inference attacks. In comparison, our work has a more theoretical flavor and presents a variety of formal definitions (in both cases of inference and reconstruction) that model various subtle aspects of the threat; in particular, we give a comprehensive comparison with the deletion compliance framework of Garg et al \cite{GGV20} and, consequently,  to differential privacy (through the lens of deletion compliance) as positive ways to prevent such attacks. 

Chen et al~\cite{chen2020machine} study a setting similar to ours. They show attacks that, given access to two models -- one trained on a dataset $\cS$ and another on $\cS\setminus\set{e}$ -- determine whether a given input $e'$ is equal to the deleted item $e$. This is close to our notion of deletion inference, though not quite the same. They show that their attacks perform much better than plain membership inference on the first model. Our work differs from that of \cite{chen2020machine} in the following  respects:
\begin{enumerate}
    \item In addition to deletion inference, we also show various kinds of reconstruction attacks in a variety of models with different reconstruction goals.% and datasets.
    %we use, while there are some in common, there is also a sizeable difference.
    \item Their attacks are constructed by running sophisticated learning algorithms on the posteriors corresponding to deleted and not deleted samples. While this results in attacks that work quite well, these attacks have little explanatory power -- it is not clear what enables them, and it is hard to tell what the best way to prevent them is. Our attacks, on the other hand, make use of simple statistics of the outputs of the models.
    \item They show that certain measures like publishing only the predicted label or using differential privacy can stop their attacks from working, but this is far from showing that such measures prevent all possible attacks. In order to prove security against all attacks, a formalization of what entails such security is necessary. We provide formal definitions of privacy and formally build a connection to the deletion compliance framework of \cite{GGV20}, which, as corollary, implies that differential privacy can provably prevent any possible deletion inference attack.
\end{enumerate}
 
The work of Salem et al~\cite{salem2020updates} also studies a related setting. In their case, a model is updated by the addition of new samples, rather than by deletion, and they show attacks that partially reconstruct either the new sample itself or its label. These attacks are constructed by training generative models on posteriors of various samples from a shadow model. It is  possible that their attacks can be used when data is {deleted} as well. In fact,   our attacks can also potentially be adapted to be applied when the data is \emph{added} rather than deleted (but the security game needs to change to formally allow this). They also present a cursory discussion of possible defences against their attacks, suggesting that adding noise to the posteriors or differential privacy might work. The distinction of our work from theirs is along the same lines as above -- our attacks are simpler and more transparent, and our formalization allows us to identify strategies for \emph{provable} security against arbitrary attacks by proving the relation of our attacks and the deletion compliance of \cite{GGV20}. On the attack side, our work studies the attack landscape with much more granularity by studying very specific attacks that aim to only reconstruct (or infer) the instances, or their labels, or leverage the knowledge of the instance to better approximate the labels.

%\section{Preliminaries (this section will go away)} \label{sec:prelim}
%\Mnote{Shall we include GGV and DP definitions also in this section}

\section{Preliminaries}
\paragraph{Basic Notation.} $[n] $ denotes $\set{1,\dots,n}$. $\cX$ denotes the \emph{instance} space, and $\cY$ denotes the \emph{label}  space. For regression tasks, $\cY=\R$ is the set of real numbers, and for classification tasks $\cY$ is a finite set where by default $\cY=\bits$.
$D$ denotes a distribution over $\cX \times \cY$, and $D^n$ denotes the $n$-fold product of $D$. A sample   $e=(x,y) \gets D$ is called a (labeled) \emph{example}. By $D\equiv D'$ we denote that   $D,D'$ are identically distributed.   When the data examples are not necessarily iid sampled, we use $S_n$ to denote a \emph{distribution} over data sets of size $n$ (one special case is $S_n \equiv D^n$) and we use $\cS \gets S_n$ to denote   sampling  $\cS$ from $S_n$. $\cH \se \cY^\cX$  denotes a set of \emph{models} (aka \emph{hypothesis class})  mapping $\cX$ to $\cY$. For example, $\cH$ could be the set of all neural nets with a specific architecture and size or the set of half spaces in dimension $d$ when $\cX = \R^d$.

\paragraph{Loss, risk, and  learning.}  A loss function $\loss \colon \cH \times \cX \times \cY \to \R$ maps an input $(h,(x,y))$ to $\R$ and  measures  how bad the prediction of $h$ on $x$ is compared to the true label $y$. For classification,  we use the 0-1 loss   $\loss(h,e)=\one[h(x)\neq y]$, where $\one$ is the Boolean indicator random variable. $\Learn$ denotes a (perhaps randomized) learner that maps any (unordered) set of examples $\cS=\set{e_1,\dots,e_n}$ to a model $h \in \cH$.  $\Risk_D(h)=\Ex_{e \gets D} \loss (h,e)$ denotes the  \emph{population risk} of $h$ over a distribution $D$.  $\Risk_\cS(h)=\Ex_{e \gets \cS} \loss (h,e)$ denotes the \emph{empirical risk}  of $h$ over a training set $\cS \subset (\cX \times \cY)^*$.  The Empirical Risk Minimization   rule  $\ERM$ is the learner that simply outputs a model that minimizes the empirical loss
$\ERM(\cS) = \argmin_{h \in \cH} \Risk_\cS(h)$. 
%We say that $D$ is \emph{$\cH$-realizable}, if $\exists h \in \cH, \Risk_D(h)=0$.
 
 %For fixed $D,n,\eps,\delta$, we call $\Learn$  an $(\eps,\delta)$-PAC (probably approximately correct) learner, if  with probability $1-\delta$ over $\cS \gets D^n$ and $ \model \gets \Learn(\cS)$, it holds that $\Risk_D(h) \leq \eps + \min_{g \in \cH} \Risk_D(g)$.
%A hypothesis class $\cH$ is PAC  learnable, if for every $(\eps,\delta)$ there is an $n$ such that   for any  distribution $D$, $\Learn$  $(\eps,\delta)$-PAC learns $D$ using $n$ samples. 

%The fundamental theory of  learning states that a hypothesis class $\cH$ is PAC learnable iff the ERM rule PAC learns it.
%\Mnote{We probably won't need the def of PAC learning, but we will need ERM}

\paragraph{Deletion.} Fix a learner $\Learn$, training set $\cS$, and model $h \gets \Learn(\cS)$. We use $h_{-e} \gets \Del_\cS(h,e)$ to denote the ``ideal'' data deletion procedure \cite{ginart2019making} that outputs $h_{-e} \gets \Learn(\cS \sm \set{e})$ using fresh randomness for $\Learn$ if needed. (Hence, if $e \not \in \cS$, then $\Del_\cS(h,e)$ simply returns a fresh retraining on $\cS$.) In general, $\Del$ needs to know the training set on which $h$ is trained, or it needs a data structure that keeps some information about $\cS$ in addition to $h$. Whenever $\cS$ is clear from the context,  we might simply write $h_{-e} \gets \Del(h,e)$. 
%We call $(\Learn,\Del)$ a \emph{machine unlearning} scheme if $\Learn$ is a learning algorithm and $\Del$ is an ideal deletion mechanism for $\Learn$.

%\input{NewDef}
%\input{Questions}
%\input{NewDefinitions}
\section{Deletion Inference Attacks}
\label{sec:del-infer}

In this section, we describe   a framework of attacks  on machine unlearning (i.e., machine learning with deletion option) schemes  that  can \emph{infer} the deleted examples. Such attacks are executed by adversaries who first access the model before deletion followed by having access to  the model after deletion. In each case, we will first formally explain our threat model. We also  provide  theoretical intuition behind our attacks and     report   experimental findings by implementing those attacks. 
%In the next section,  we will study stronger attacks that can even  \emph{extract} (perhaps approximations of) the features of the deleted examples. 
%
% We describe the details of the datasets that we use and  how we synthesize data  in the appendix. \Pnote{Is the last sentence still relevant?} \Mnote{Ji, can you please add that to the appendix, if not already?}

%\section{Preliminaries and Notation}

%To emphasize that a model is trained with $n$ examples, we denote it as $h_n$, and use the same notation for $\Learn_n,\cS_n$ for the same purpose.

%\section{Deletion Inference} \label{sec:del-infer}
%In this subsection we describe our  attacks that infer which example from a training set is being deleted from a model.  
%We first describe the threat model, which is the same in both cases of regression and classification.

\subsection{Threat Model}
We define a security game that captures how well an adversary can tell which element is being deleted from the training set. Note that our (default) definition is not aiming to hide the fact that \emph{something} is being deleted, and the only thing we try to hide is \emph{which} element is being deleted. We use a definition that is inspired by how (CPA or CCA) security of encryption schemes are defined through indistinguishability-based security games \cite{goldwasser1984probabilistic,naor1990public}.
 
% \Mnote{changed the following def to use an arbitrary distribution of data (not iid) so that we can cover our experiments. i also change the name "no-label" to "instance".}

\begin{definition}[Deletion inference] \label{def:WeakDelPriv}
Let $\Learn$ be a learner, $\Del$ be a deletion mechanism for $\Learn$, and $S_n$ be a distribution on datasets of size $n$. The adversary $\Adv$ and the challenger $\Chal$ interact as follows.
\begin{enumerate}
    \item {\bf Sampling the data and revealing the challenges.} $\Chal$ picks a dataset $\set{z_1,\dots,z_n}=\cS \gets S_n$ of size $n$. $\Chal$   picks two indices $i\neq j \in [n]$ at random and sends $e_0=z_i, e_1=z_j$  to $\Adv$.
        \item {\bf Oracle access  before deletion.}  $\Chal$     trains $\model \gets \Learn(\cS)$.
   $\Adv$ is then given \emph{oracle} access to $\model$, and finally instructs moving to the next step.
  
      \item {\bf Random selection and deletion.} $\Chal$ picks   $b \gets \bits$ at random and lets $\modelDel \gets \Del(h,e_b)$.
     \item {\bf Oracle access after deletion.}     
        The adversary $\Adv$   is now given {oracle} access (only) to  $\modelDel$.
    
    \item {\bf Adversary's guess.}  The adversary  sends out a bit  $b'$ to $\Chal$ and wins if $b'=b$.
\end{enumerate}

The scheme $(\Learn,\Del)$ is called \emph{$\rho$  insecure  against deletion inference} for data distribution $S_n$, if there is a PPT adversary $\Adv$   whose success probability in the game above is at least $\rho$.  (Note that achieving $\rho=1/2$ is trivial.) 
Now, consider a modified  game in which the adversary is  given \emph{only the instances} $(x_0,x_1)$ where $e_0=(x_0,y_0),e_1=(x_1,y_1)$. We call this game the \emph{instance} deletion inference. If an adversary has success probability   at least   $\rho$  in the instance deletion inference game, then the scheme $(\Learn,\Del)$ is called $\rho$  insecure  against \emph{instance} deletion inference for distribution $S_n$. Similarly, we define \emph{label} deletion inference, in which only the labels $(y_0,y_1)$ are revealed to the adversary, and $\rho$-insecurity against such attacks accordingly.
To contrast with instance and label deletion inference, we might use \emph{example} deletion inference attack to refer to our  default deletion inference attacks.
\end{definition}

Note that winning in an instance or label deletion inference game is potentially harder than winning the normal  variant (with full examples revealed to the adversary) as the adversary can always ignore the full information given to it. Hence, showing  successful instance deletion inference attacks is a stronger (negative) result. We empirically study the power of attacks in all these attack models.
%and show that even  instance deletion inference adversaries can be quite powerful on a variety of popular models. 

\paragraph{Other variants of Definition~\ref{def:WeakDelPriv}.}
Definition~\ref{def:WeakDelPriv} can be seen as a \emph{weak} definition of privacy for deletion inference. The following list describe variants of Definition~\ref{def:WeakDelPriv} that are either directly weaker, or our attacks can be adapted to in a  rather straightforward way. 
\begin{itemize}
    \item {\bf Two-challenges vs. one challenge.} Definition~\ref{def:WeakDelPriv} includes two challenge examples and asks an adversary to find out which one is the actual deleted one. An alternative definition would only reveal one example to the adversary and asks it to tell if the example is deleted or not.\footnote{If one can sample from the set $\cS$ the two attack models can be shown to be  equivalent using standard hybrid arguments when the adversary's success probability is negligible in security parameter. This is similar to how a similar reduction works for CPA/CCA security games in cryptography.}
    %\Mnote{I might add something formal about this in the appendix.} 
    \item {\bf Deletion-revealing vs. deletion-hiding.} Definition~\ref{def:WeakDelPriv} does not aim to hide the fact that a deletion has happened. An alternative definition could even aim to capture hiding the deletion itself by sampling the non-deleted   example \emph{outside} the dataset. A hybrid variant would challenge the adversary to distinguish between a deleted example versus a fresh sample from the distribution under which the learning is happening. All of our attacks apply to all these variants, but for brevity of presentation, we pick the deletion-revealing variant as the default.
    %Our attacks naturally extend to such setting as well.
    %\Mnote{If we have experiments with this version, or any other version, they need to be separated out and treated more formally.}
    \item {\bf Random vs. chosen challenges.} Definition~\ref{def:WeakDelPriv} asks the adversary to distinguish between a \emph{random} pair of challenge examples, one of which is deleted. In a stronger attack model, the adversary is allowed to \emph{choose} the challenge examples.  
    \item {\bf Auxiliary information.} Definition~\ref{def:WeakDelPriv} does not explicitly give any extra information about other examples $e_k, k \notin \set{i,j}$ to the adversary, while a real-word adversary might have such knowledge.
    \item {\bf Multiple deletions vs. one deletion.} Definition~\ref{def:WeakDelPriv} does not allow more than one deletion to happen, while in general users might request multiple deletions to happen over time. {In fact, in Section~\ref{sec:compare}, we use this variant of the attacks to test our attacks on large data sets and compare the result with deletion inference attacks that are obtained by reduction to membership inference.}
    % generalize our setting and test with multiple deletions.
\end{itemize}
%Focusing on Definition~\ref{def:WeakDelPriv} only makes our attacks stronger.

In Section~\ref{sec:GGV}, we discuss stronger security definitions  that once satisfied would prevent the attack of Definition~\ref{def:WeakDelPriv} and all the variants above as special cases of the Deletion Compliance framework of  Gar et al. \cite{GGV20}. In particular, the definitions of this section  (including Definition~\ref{def:WeakDelPriv}) model \emph{weaker} security guarantees than that  of the Deletion Compliance framework of \cite{GGV20}, which makes our attack results of this section stronger.

\subsubsection{Reducing Deletion Inference to Membership Inference}
One can always reduce the task of  deletion inference to the task of membership inference. In particular, if we had a perfect membership inference oracle, we could use it to infer whether a given example is deleted or not by calling the membership inference oracle on the two models $\model,\modelDel$. 
% \paragraph{Baseline method.} 
%

%We then define an adversary in Algorithm~\ref{const:reduction_adversary} that uses this oracle to do a membership inference.
 Algorithm~\ref{alg:reduction} below shows  an intuitive way to reduce deletion inference (DI) to \emph{imperfect} membership inference (MI) in a black-box way. Specifically, suppose the membership inference adversary $M(e, \model) \rightarrow \{0, 1\}$ returns $1$ if (it thinks) $e$ is a member of the dataset that is used to obtain the model $\model$. Then, if a deletion inference adversary  wants to find out whether $e  $ is deleted from the model $\model$ to reach the model $\modelDel$, it can simply run $M(e, \modelDel)$ and output what it outputs.  Note that there is no need to run $M(e, \model)$, as the adversary of Definition~\ref{def:WeakDelPriv} is given the promise that both $e_0,e_1$ \emph{are} members of the initial dataset $\cS$. Then the only question is how to combine the answers $M(e_0,\modelDel),M(e_1,\modelDel)$, which Algorithm~\ref{alg:reduction} decides in a natural way.
%We also note that membership inference attacks might come with a \emph{confidence} in their judgement as well and that might be further useful for such a reduction, but the algorithm below only focuses on using the final answers as a black-box. 

\begin{alg}[From membership to deletion inference] \label{alg:reduction} Given examples $e_0 = (x_0, y_0)$, $e_1 = (x_1, y_1)$ and models $\modelDel$, 
% and also a score function $\Score: \{0,1\}^2 \rightarrow N$, 
the reduction from deletion inference to membership inference proceeds as follows:
\begin{enumerate}
    \item Perform two membership inferences to obtain  $b_0 = M(e_0, \modelDel)$ and $b_1 = M(e_1, \modelDel)$.
    \item Return 0 if $b_0 = 0,b_1=1$, return 1 if $b_1 = 1,b_0=0$, and return a random bit if $b_0=b_1$. \qedhere
 \end{enumerate}
\label{const:reduction_adversary}
\end{alg}

\paragraph{Using confidence probabilities.} An alternative reduction to Algorithm~\ref{alg:reduction} can use the \emph{confidence} probabilities of $ M(e_0, \modelDel)$ and $ M(e_1, \modelDel)$ instead of their final (rounded) values. In this variant, the reduction returns $0$ if the confidence difference of $M(e_0, \modelDel) - M(e_0, \model)$ to output zero is more than the confidence difference of $M(e_1, \modelDel) - M(e_1, \model)$ to output zero. 

\subsection{Our Baseline Deletion Inference Attacks} 
We propose two variants of  attacks: (1) (example) deletion inference attack of $\DInfWith$ which uses both instances and their true labels, and (2) instance inference attack of $\DInfNo$ which only uses the  instances, without knowing the true labels. %The first type of attacks is, as anticipated, more successful.
(In the next subsection, we also show how to \emph{find} the deleted label, which can be seen as a form of ``label reconstruction''and is stronger than  label inference attacks.)
%\Mnote{Using Construction for the attacks!}

\paragraph{Attack  $\DInfWith$  using labeled examples.} 
Our  example inference attack $\DInfWith$ is parameterized by a loss function $\loss$ and proceeds by first computing the loss for both examples $e_0,e_1$ on both models $\model,\modelDel$. Then, this attack identifies the deleted example by picking the example that leads to a \emph{larger increase} in its loss when we go from $\model$ to $\modelDel$. The intuition behind our attack is that the examples in the dataset are optimized (to a degree depending on the learning algorithm) to have small loss, while examples outside the dataset are not so. Therefore, once an example goes from inside the dataset to outside, it incurs a larger increase in loss.  We now define the attack formally. 

\begin{alg}[Attack $\DInfWith$] \label{const:delinflbl}
The attack is defined with respect to a loss function $\ell$. For any example $e$, we define the \emph{loss increase} of $e$ as:
$\delta(e,\model,\modelDel) = \loss(\modelDel, e) - \loss(\model, e).$
The adversary is given two labeled examples $e_0 = (x_0, y_0)$ and $e_1 = (x_1, y_1)$ and also has oracle access to $\model$ followed by access to $\modelDel$. 
The attack   proceeds as follows. 
\begin{enumerate}
    \item Query   $\model$ on both $x_0,x_1$.
    \item After getting access to $\modelDel$, query $\modelDel$ on both $x_0,x_1$.
    \item     Compute loss increases $\delta(e_0,\model,\modelDel)$ and $\delta(e_1,\model,\modelDel)$, and let $\alpha=\delta(e_0,\model,\modelDel) - \delta(e_1,\model,\modelDel)$.
    \item   Output $0$ if $\alpha>0$, output $1$ if $\alpha<0$, and output a uniformly random bit $b' \in \bits$ if $\alpha=0$. \qedhere
\end{enumerate}
\end{alg}

\paragraph{Connection to  memorization.} At a high level, $\DInfWith$ can be seen as   generalizing the notion of memorization by Feldman \cite{feldman2020does} from the 0-1 loss to general loss functions. More formally, if we use the 0-1 loss, then for $e \in \cS$, the \emph{expected} value $\Exp_{\modelDel \gets \Del(\model,e)}\delta(e,\model,\modelDel) $ would become equal to $\mathsf{mem}(\Learn, \cS, e)$   defined in \cite{feldman2020does}   to measure how much the learner $\Learn$ is memorizing the labels of its training set.   Using this intuition, our adversary picks the example that is \emph{most memorized} by the model.

% The attack marks the record $(x,y)$ with \emph{larger} increase on the loss to be the deleted sample.
The following lemma further formalizes the intuition behind our attack $\DInfWith$, so long as the the learning algorithm is the ERM rule.
%it suggests that when   the model cannot fit to all the data with zero loss, \emph{the average} of the loss increase for $e$, where $e'$   is \emph{not} deleted, cannot be more than that of the deleted example $e$.
\begin{lemma} \label{prop:changes}
Let $\ERM$ be the empirical risk minimization learning rule using a loss function $\loss$.
Let $\model=\ERM(\cS)$, $\modelD{e} \gets \Del(h,e)$ for $e \in \cS$, and  $\cS_{-e} = \cS \sm \set{e}$. Let $\delta_e = \delta(e,\model,\modelD{e})$, and let $\delta_{-e} = \Ex_{e' \gets \cS_{- e}} [\delta(e',\model,\modelD{e})]$ be the expected value of loss increase for examples that \emph{remain} in the dataset. Then   the following two hold.
\begin{enumerate}
    \item $\delta_{-e} \leq 0$.
    \item $\delta_e \geq -(n-1)\cdot \delta_{-e}$ where $n=|\cS|$. (In particular, by Part 1, it also holds that $\delta_e \geq 0$.)

\end{enumerate}
\end{lemma}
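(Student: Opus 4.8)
The plan is to rewrite both $\delta_e$ and $\delta_{-e}$ purely in terms of empirical risks, and then invoke the two optimality guarantees built into the $\ERM$ rule: $\model$ minimizes empirical risk over $\cS$, while $\modelD{e}$ minimizes empirical risk over $\cS_{-e}$ (since the ideal deletion mechanism retrains from scratch on $\cS \sm \set{e}$). Throughout I rely on the fact that $\Risk_\cS$ weights the $n$ examples of $\cS$ uniformly, so that for every $h$ one has the decomposition $\Risk_\cS(h) = \frac{1}{n}\loss(h,e) + \frac{n-1}{n}\Risk_{\cS_{-e}}(h)$.

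For Part 1, I would note that $\delta_{-e}$ is by definition the average over $e' \in \cS_{-e}$ of $\loss(\modelD{e}, e') - \loss(\model, e')$, and hence equals $\Risk_{\cS_{-e}}(\modelD{e}) - \Risk_{\cS_{-e}}(\model)$. Since $\modelD{e} = \ERM(\cS_{-e})$ is a minimizer of empirical risk over $\cS_{-e}$, this difference is at most $0$, giving $\delta_{-e} \leq 0$ at once.

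For Part 2, I would begin from the optimality of $\model$ on the full set, $\Risk_\cS(\model) \leq \Risk_\cS(\modelD{e})$, and substitute the decomposition above into both sides. After clearing the common factor $1/n$, the inequality becomes $\loss(\model, e) + (n-1)\Risk_{\cS_{-e}}(\model) \leq \loss(\modelD{e}, e) + (n-1)\Risk_{\cS_{-e}}(\modelD{e})$. Collecting the $e$-terms on one side and the $\cS_{-e}$-risks on the other yields $\delta_e = \loss(\modelD{e}, e) - \loss(\model, e) \geq (n-1)\bigl(\Risk_{\cS_{-e}}(\model) - \Risk_{\cS_{-e}}(\modelD{e})\bigr) = -(n-1)\delta_{-e}$, which is exactly the claimed bound. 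Combining with Part 1 (so that $-(n-1)\delta_{-e} \geq 0$) then gives the parenthetical consequence $\delta_e \geq 0$ for free.

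I do not expect a genuine obstacle: the whole content is the two $\ERM$ optimality inequalities together with one linear rearrangement. The only care needed is bookkeeping—keeping the direction of each inequality straight (a minimizer gives $\leq$), applying each optimality statement to the correct dataset ($\model$ on $\cS$, $\modelD{e}$ on $\cS_{-e}$), and using the uniform-weighting decomposition of $\Risk_\cS$ consistently so that the $(n-1)$ factor emerges cleanly rather than an erroneous $n/(n-1)$.
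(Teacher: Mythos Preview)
Your proposal is correct and matches the paper's proof essentially line for line: Part~1 is identical, and for Part~2 the paper uses the same decomposition $\Risk_\cS(h) = \frac{1}{n}\loss(h,e) + \frac{n-1}{n}\Risk_{\cS_{-e}}(h)$ together with the optimality of $\model$ on $\cS$, merely phrasing the rearrangement as a contradiction rather than directly as you do.
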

\begin{proof}
The first item of the lemma holds simply because we are using the ERM rule. Namely,   $\modelD{e}$   minimizes the empirical loss over $\cS_{-e} = \cS \sm \set{e}$. Therefore:
$$\delta_{-e} = \Risk_{\cS_{-e}}(\modelD{e}) -  \Risk_{\cS_{-e}}(\model) \leq 0.$$

Having proved the first part,  the second part  also follows due to using   the ERM rule. In particular, suppose for sake of contradiction that $ \delta_e < -(n-1)\cdot \delta_{-e}$, where $n=|\cS|$. Then,
$$\loss(\model, e) +(n-1) \cdot \Risk_{\cS_{-e}}(\model)> \loss(\modelD{e}, e) +(n-1) \cdot \Risk_{\cS_{-e}}(\modelD{e}). $$ 
Then, this implies 
\begin{equation*}
    \begin{split}
         \Risk_\cS(\model) &= \frac{\loss(\model,e) +(n-1) \cdot \Risk_{\cS_{-e}}(\model) }{n} \\ &>
\frac{\loss(\modelD{e},e) +(n-1) \cdot \Risk_{\cS_{-e}}(\modelD{e}) }{n} =
\Risk_\cS(\modelD{e}).
    \end{split}
\end{equation*}
However, the this  contradicts that the   $\ERM$ rule outputs   $h$ on training set $\cS$.
\end{proof}
Proposition~\ref{prop:changes} shows that whenever (1) $\delta_{-e} = \Ex_{e' \gets \cS_{-e}} [\delta_{e'}] < 0$ and (2) $\delta(e',\model,\modelD{e})$ for $e' \in \cS_{-e}$ is concentrated around its mean $\delta_{-e}$, then for a random $e' \in \cS_{-e}$, the attack $\DInfWith$ of Algorithm~\ref{const:delinflbl} would   likely   identify the deleted example correctly. Even though, in general we are not able to prove when these two conditions hold,   our experiments  confirm that these conditions indeed hold   in many natural scenarios, leading to the success of  $\DInfWith$ of Algorithm~\ref{const:delinflbl}.
%The above lemma shows that when the non-deleted sample is chosen from the dataset itself, for a typical choice of (the selected non-deleted sample) our attack finds the right answer.

\paragraph{Attack  $\DInfNo$  using instances only.} We now discuss our attack that does not rely on knowing the true labels $y_0,y_1$. 
% Our  attack in this case applies to regression problems (where $\cY = \R$). 
The intuition is that, even if we do not know the true labels, when an example $e$ is deleted from the dataset, the change in the predicted label for $e$ is likely to be more than that of other examples that stay in the dataset. The reason is that for the remaining examples, the model is still trying to keep their prediction close to their correct value, but this optimization is not done for the deleted example $e$. Hence, our adversary would pick the candidate example that leads to \emph{larger change} in the output \emph{label} (not necessarily the loss). Hence, the attack is more natural to be used for regression tasks, even though it can also be used for classification if one uses the confidence parameters instead of the final labels.
%  namely, $|\model(x_0) - \modelD{e}(x_0)|$ and $|\model(x_1) - \modelD{e}(x_1)|$. The adversary marks the record with \emph{larger} distance as the deleted sample. 

\begin{alg}[Attack $\DInfNo$]
The attack is parameterized by a distance metric $\dis$  over $\cY$ (e.g., $\cY=\R$ and $\dis(y_0,y_1)=|y_0-y_1|$). The adversary is given two instances $x_0,x_1$, and it has oracle access to $\model$ followed by $\modelDel$. 
The attack then proceeds as follows.
\begin{enumerate}
    \item Query the models (in the order of accessing them) to get $\model(x_0)$, $\model(x_1)$, $\modelDel(x_0)$,  $\modelDel(x_1)$, and let $\beta = |\model(x_0) - \modelDel(x_0)| - |\model(x_1) - \modelDel(x_1)|$. 
    \item Return 0 if $\beta>0$,  return 1  if $\beta<0$, and return a random answer in $\bits$ if $\beta=0$. \qedhere
\end{enumerate}
\label{const:delinf}
\end{alg}

%Intuitively, the deleted sample's distance will be larger in comparison to a sample which either remains in the dataset $\cS$ or remains out of the dataset $\cS$. 

% \section{Experiment Result}
\subsection{Experiments: Deletion Inference Attack on Regression}
\label{sec:del_inf_regression}

Now we apply our attack $\DInfWith$ (Algorithm~\ref{const:delinflbl}) and attack $\DInfNo$ (Algorithm~\ref{const:delinf})   on multiple regression models including Linear Regression, Lasso regression, SVM Regressor, Decision Tree Regressor, and Neural Network Regressor\footnote{Implementation of the methods are from the python library Scikit-learn.}. Details of the attacked models are included in Appendix~\ref{sec:hyperparameter}. 

\paragraph{Experiment details.} Table~\ref{tab:dataset} includes the details of all the datasets we used in the deletion inference experiments and also in other experiments later. We use two regression datasets Boston and Diabetes. For training the original model $h$, we use a random subset with 90\% of the dataset.
The experiment follows the security game of Definition~\ref{def:WeakDelPriv}. 
% We assume the training dataset is the i.i.d. sample challenger picked from distribution $D$ whose size is $n$. Furthermore, 
To ensure the perfect deletion,  $\modelDel$ is obtained by a  full re-training   with the  dataset without the deleted example.
For the attack $\DInfWith$, we use squared loss, which is defined as $\ell(\model, (x, y)) = (\model(x) - y)^2$.
Finally, we repeat the security game of Definition~\ref{def:WeakDelPriv} 1000 times and take the average success probability of the adversaries.

\paragraph{Results.} The result is shown in Table~\ref{tab:deletion_reg}. In most cases, our adversary gets more than 90\% success probability in the deletion inference.
%tasks, indicating a huge leakage in data privacy.

% \Jnote{ToDo:Fix the format later.}

\begin{table*}[t]
% \scriptsize
\resizebox{0.98\textwidth}{!}{%
\centering
\begin{tabular}{llllll}
\hline
                                &                                     & No. Samples & No. Features & Label           & Predict                       \\ \hline
\multirow{2}{*}{Regression}     & Boston \cite{harrison1978hedonic}                     & 506         & 14           & Real       & The median house price        \\
\cline{2-6}
                                & Diabetes \cite{efron2004least}                             & 442         & 10           & Real       & Disease progression   \\ 
% \cline{2-6}
%                                 & IWPC \cite{international2009estimation}                             & 4819         & 8           & Real       & Dosage of  warfarin   \\ 
% \cline{2-6}
%                                 & Eyedata \cite{scheetz2006regulation}                             & 120         & 200           & Real       & Gene expression level  
% \\ 
\hline
\multirow{3}{*}{Classification} & Iris \cite{fisher1936use}            & 150         & 4            & 3 types  & The type of iris plants       \\ \cline{2-6}
                                & Wine \cite{aeberhard1994comparative} & 178         & 13           & 3 types & Wine cultivator               \\ \cline{2-6}
                                & Breast Cancer \cite{street1993nuclear}                       & 569         & 30           & Binary          & Benign/malignant  tumors
                                \\ \cline{2-6} & 1/12MNIST\cite{lecun1998gradient} & 5000 & 784 & 10 types & Digit between 0 to 9  
                                \\  \cline{2-6} &  CIFAR-10 \cite{krizhevsky2009learning} & 60000 & 3072 & 10 classes & Image classification\\ \cline{2-6} & CIFAR-100 \cite{krizhevsky2009learning} & 60000 & 3072 & 100 classes & Image classification
               \\ \hline
\end{tabular}
}
\captionsetup{justification=centering}
\caption{Descriptions of the datasets used in deletion inference.}
\label{tab:dataset}
\end{table*}

\begin{table}[ht]
\centering
% \resizebox{0.49\textwidth}{!}{%
\begin{tabular}{l|ll|ll}
\hline 
                    & \multicolumn{2}{c|}{Boston} & \multicolumn{2}{c}{Diabetes} \\ \hline
                    \textit{Learning Method} & $\DInfWith$  & $\DInfNo$ & $\DInfWith$  & $\DInfNo$  \\ \hline
Linear regression & 99.8\%  & 99.1\%  & 99.8\%  & 99.3\%  \\ \hline
SVM               & 93.9\%  & 89.1\%  & 99.2\%  & 100.0\% \\ \hline
Lasso regression  & 98.8\%  & 97.1\%  & 99.3\%  & 98.3\%  \\ \hline
Decision tree     & 100.0\% & 100.0\% & 100.0\% & 100.0\% \\ \hline
MLP  & 80.4\% & 78.3\% & 72.2\% & 72.3\% \\
\hline
\end{tabular}
% }
\caption{Success probabilities  of various attacks on regressors for different datasets.}
\label{tab:deletion_reg}
\end{table}

\subsection{Experiments: Deletion Inference Attacks on Classification}
\label{sec:del_inf_classification}
%\paragraph{Attack's setting and the success criteria.}

In this experiment, we apply $\DInfWith$ and $\DInfNo$ on classification tasks. In our experiments, we use different models, including logistic regression, support vector machine (SVM), Decision tree, random forest, and multi-layer perceptron (MLP).   Due to page limit, details of the models are included in Appendix~\ref{sec:hyperparameter}.

\paragraph{Experiment details.} 
 We use datasets Iris, Wine, Breast Cancer, and 1/12MNIST. (The details of the datasets are shown in Table~\ref{tab:dataset}.) Similarly to attacks on regression, We pick a random 90\% fraction of the dataset to train the model, and we do a full retrain to obtain $\modelDel$.
%Our experiment again follows the security game of Definition~\ref{def:WeakDelPriv}. 
% We assume challenger picks the the training dataset. 
%For $\modelDel$, we do a full re-training again with the new dataset without the deleted example.
The difference compared to the case of regression  is that the label space $\cY$ is now a finite set. 
% \Mnote{is this experiment really using iid samples? Is not it avoiding repetition? I generalized Def 3.1 to cover non-iid.} 
In this experiment, we assume the output of any hypothesis function $h \in \cH$  is a multinomial (confidence) distribution over $\cY$, and this probability is available to the adversary. This assumption is realistic as many machine learning applications have the confidence as part of the output~\cite{ribeiro2015mlaas}, and this is also the default setting of many adversarial machine learning researches~\cite{shokri2017membership, long2018understanding}\footnote{The model in this scenario is still considered as black-box in most machine learning adversarial literature, but someone may argue it is not fully black-box.}. To formally fit the attack into the framework of Definition~\ref{def:WeakDelPriv}, we can extend the set $\cY$ to directly include any such multinomial distribution as the actual output ``label''.

For $\DInfWith$, we use the negative log likelihood   loss function  $\ell(h, (x, y)) = -\log\left(\Pr[h(x) = y]\right)$. We then repeat the security game of Definition~\ref{def:WeakDelPriv} 1000 times to approximate the winning probability.
% \Mnote{My guess is that the previous subsubsection goes away? Then we do not need to say "real data" in the title, as we will always work with real data}

\paragraph{Results.} 
 We present the   result of attacks $\DInfWith$ and $\DInfNo$ on three classification datasets in Table~\ref{tab:deletion_class}. As anticipated, the success rates $\DInfWith$ are noticeably larger than those of $\DInfNo$.

% \begin{table}[ht]
% \begin{tabular}{ll}
% \hline
% Learning Methods    & DelInf-Lbl \\ \hline
% Logistic Regression & 74.79\%    \\ \hline
% SVM                 & 70.20\%    \\ \hline
% Decision Tree       & 100.00\%   \\ \hline
% Random Forest       & 95.00
% \%       \\ \hline
% \end{tabular}
% \caption{Summary of success probability of the attack on classification models for the 1/12 MNIST dataset.}
% \label{tab:12mnist}
% \end{table}

\begin{table*}[ht]
\centering
\resizebox{\textwidth}{!}{%
\begin{tabular}{l|ll|ll|ll|ll}
\hline 
                   Datasets $\rightarrow$ & \multicolumn{2}{c|}{Iris} & \multicolumn{2}{c}{Wine} & \multicolumn{2}{|c}{Breast Cancer} & \multicolumn{2}{|c}{1/12 MNIST} \\ \hline
                    \textit{Learning Method $\downarrow$} & $\DInfWith$  & $\DInfNo$ & $\DInfWith$  & $\DInfNo$ & $\DInfWith$      & $\DInfNo$ & $\DInfWith$      & $\DInfNo$      \\ \hline
Logistic Regression & 88.3\%      & 86.8\%   & 80.8\%      & 76.1\%   & 69.1\%          & 60.6\%  & 72.9\% & 56.6\%     \\ \hline
Decision Tree       & 100.0\%     & 100.0\%  & 100.0\%     & 100.0\%  & 100.0\%         & 100.0\%   & 100.0\% & 100.0\%     \\ \hline
SVM                 & 70.5\%      & 60.3\%   & 76.9\%      & 66.7\%   & 73.8\%          & 57.3\%  & 72.3\% & 62.0\%  \\ \hline
Random Forest       & 89.2\%      & 89.1\%   & 83.3\%      & 78.1\%   & 89.2\%          & 85.7\%  & 89.9\% & 84.5\% \\ \hline
MLP & 92.9\% & 55.5\% & 54.2\% & 51.1\% & 83.5\% & 67.7\% & 62.5\% & 59.0\%  \\
\hline
\end{tabular}
}
\captionsetup{justification=centering}
\caption{Success probabilities  of the attacks $\DInfWith$ and $\DInfNo$  on classifiers.}
\label{tab:deletion_class}
\end{table*}

{ 
\subsection{Experiments: Attacking  Large  Datasets and Models} \label{sec:compare}
In this section, we aim to show that our deletion inference attacks can be scaled to work with large datasets and models. We demonstrate the power of our attacks on datasets of the same size as those of \cite{shokri2017membership} and compare the power of our direct deletion inference to doing reduction to the membership inference attack of \cite{shokri2017membership}.  We  show that using our method  can lead to \emph{significantly} stronger results than making a \emph{black-box} use of  membership inference attacks. 
}

We evaluate our deletion inference attacks $\DInfWith$ and $\DInfNo$ on large dataset and large neural networks. In our experiment, we use CIFAR-10 and CIFAR-100 datasets \cite{krizhevsky2009learning} as the training dataset, which are standard datasets for the evaluation of image classifiers, especially for deep learning models.

To better compare the success of our attacks with  \cite{shokri2017membership} we use a variant attack of Definition~\ref{def:WeakDelPriv} in which \emph{multiple} deletions happen (as explained in one of the variants following Definition~\ref{def:WeakDelPriv}). One advantage of this experiment setting is that the attack of  \cite{shokri2017membership} needs to train ``attack models'' for each victim model, and hence having multiple different deletions lead to multiple full training of attack models for \cite{shokri2017membership} which is very expensive to run. However, in the multiple-deletion attack setting, one needs to only train the attack models of \cite{shokri2017membership} twice to compare   our attack with a reduction to MI of \cite{shokri2017membership}.

\paragraph{Setting of our attack.} The success probability is then calculated by taking the average over 20 rounds of full experiment. In each round of experiment, we first train a deep model with $n$ examples,
%\Mnote{Why using $N$ instead of $n$?} 
where $n$ varies from $15,000, 20,000, 25,000$, and $29,540$ ($29,540$ is picked to match the scenario of \cite{shokri2017membership}). We then randomly remove a batch of 100 examples in the training dataset, and train a new model without those $100$ examples. As a reference, we pick another $100$ random examples that remains in the dataset. The success probability is calculated over every pairs (in total, $10,000$ pairs) of the deleted and reference examples, i.e., one deleted examples and one remaining example is given to the deletion inference adversaries $\DInfWith$ and $\DInfNo$. We then measure the fraction of all pairs in which our adversary correctly predicts the deleted example. We evaluate our results on two deep neural network models: 1. A convolutional neural network that includes two convolutional layers (called smallCNN below), similar to the network used in \cite{shokri2017membership}. 2. VGG-19 network (called VGG below) that has 19 layers in total, which is well-known for its power for image classification tasks.

%  The first reduction  is the same of our (default) reduction in Section~\ref{sec:compare} that used final predictions only. For completeness, we also use the variant of the reduction that uses confidence probabilities, since \cite{shokri2017membership} trains an attack model and hence does provide the \emph{probabilities} of membership. We measure the relative change of this membership probabilities and return the example that decreases more as the deleted example. We compare our attacks to both reductions explained above.
%  Note that to perform the reduction \cite{shokri2017membership}, one need to retrain the attack model, and also gather another large dataset for that. 

\paragraph{Baseline settings for comparison.} We compare our attacks  with reductions to the membership inference attack in \cite{shokri2017membership}\footnote{We implemented \cite{shokri2017membership} attack. \cite{shokri2017membership} reports their membership inference attack achieves $71\%$ success rate on a CNN model with two convolutional layers that is trained with CIFAR-10 dataset with $15,000$ random examples. Our implementation of membership inference attack achieves 74\% success rate on smallCNN model (which also has two convolutional layers) and 88\% success rate on VGG model, which are trained on a subset of CIFAR-10 dataset with $15,000$ random examples. The success rate matches the number reported in their work.}, i.e., reduction with label only and reduction with confidence probabilities. 
 
%Our experimental results indicate that deletion leakage exists in large dataset and deep models, and can be exploited with simple adversaries.

% As a direct comparison, we implement the attack from \cite{shokri2017membership} on the CIFAR-10 dataset, and it gets 85.35\% accuracy with 15000 examples.
\begin{figure*}[ht]
    \centering
\subfigure[]{\includegraphics[width = 0.48\textwidth]{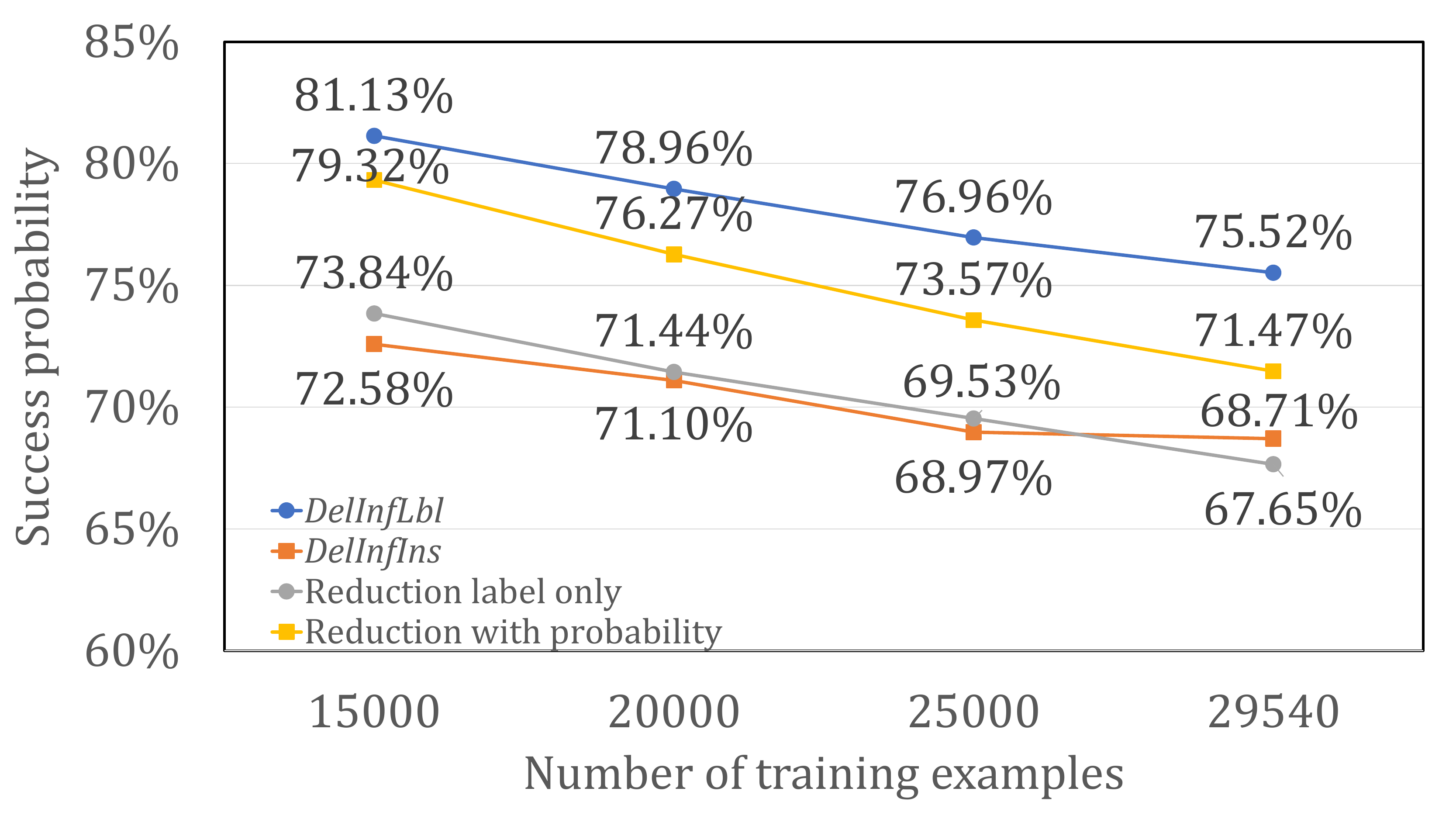}}
\subfigure[]{\includegraphics[width = 0.48\textwidth]{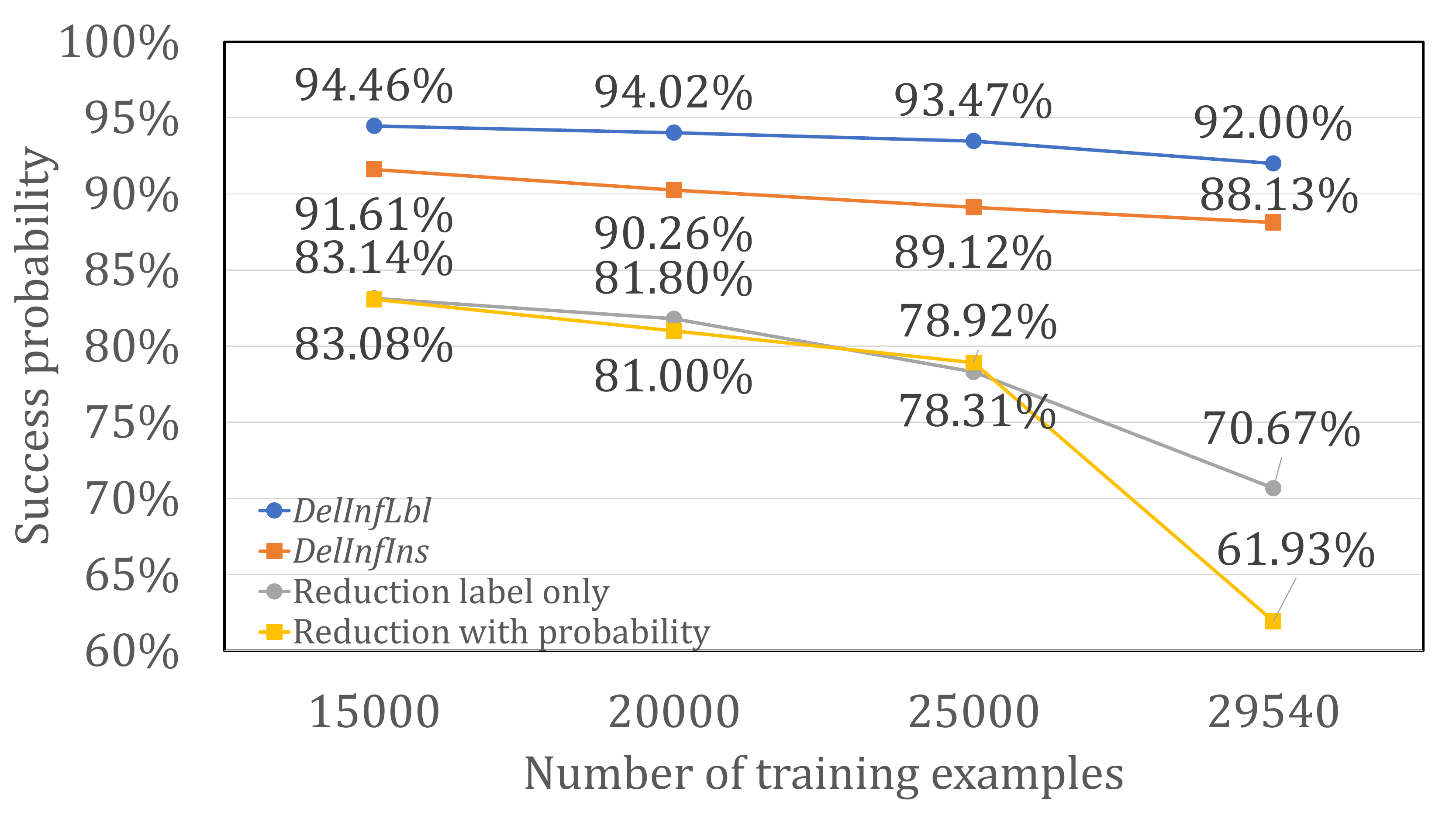}}
    \caption{{Trend of success probabilities of attacks $\DInfWith$ and $\DInfNo$ on \textbf{smallCNN} models trained with different number of examples are shown; (a) uses dataset CIFAR-10 and (b) uses dataset CIFAR-100 dataset. The success probabilities are also compared with two baseline attacks that are obtained by reductions to the membership inference attack of \cite{shokri2017membership}.}}
    \label{fig:dataset_size_cnn}
\end{figure*}

\paragraph{Results.} In Figure~\ref{fig:dataset_size_cnn} and~\ref{fig:dataset_size_vgg}, we analyze the success probabilities of our deletion inference adversaries $\DInfWith$ and $\DInfNo$ on smallCNN model and VGG model. Our attack is able to correctly predict most of the deletions in the deep learning models, even when a batch of examples is deleted at the same time. Furthermore, note that for the membership inference attack of \cite{shokri2017membership} to work, the adversary needs to have the label of the target instance and also make many queries to the target model for training an attack model (or many auxiliary data examples to train a similar model). On the other hand, our attack is extremely simple, and $\DInfNo$ even does not require the label of the example.

\begin{figure*}[ht]
    \centering
    \subfigure[]{\includegraphics[width = 0.48\textwidth]{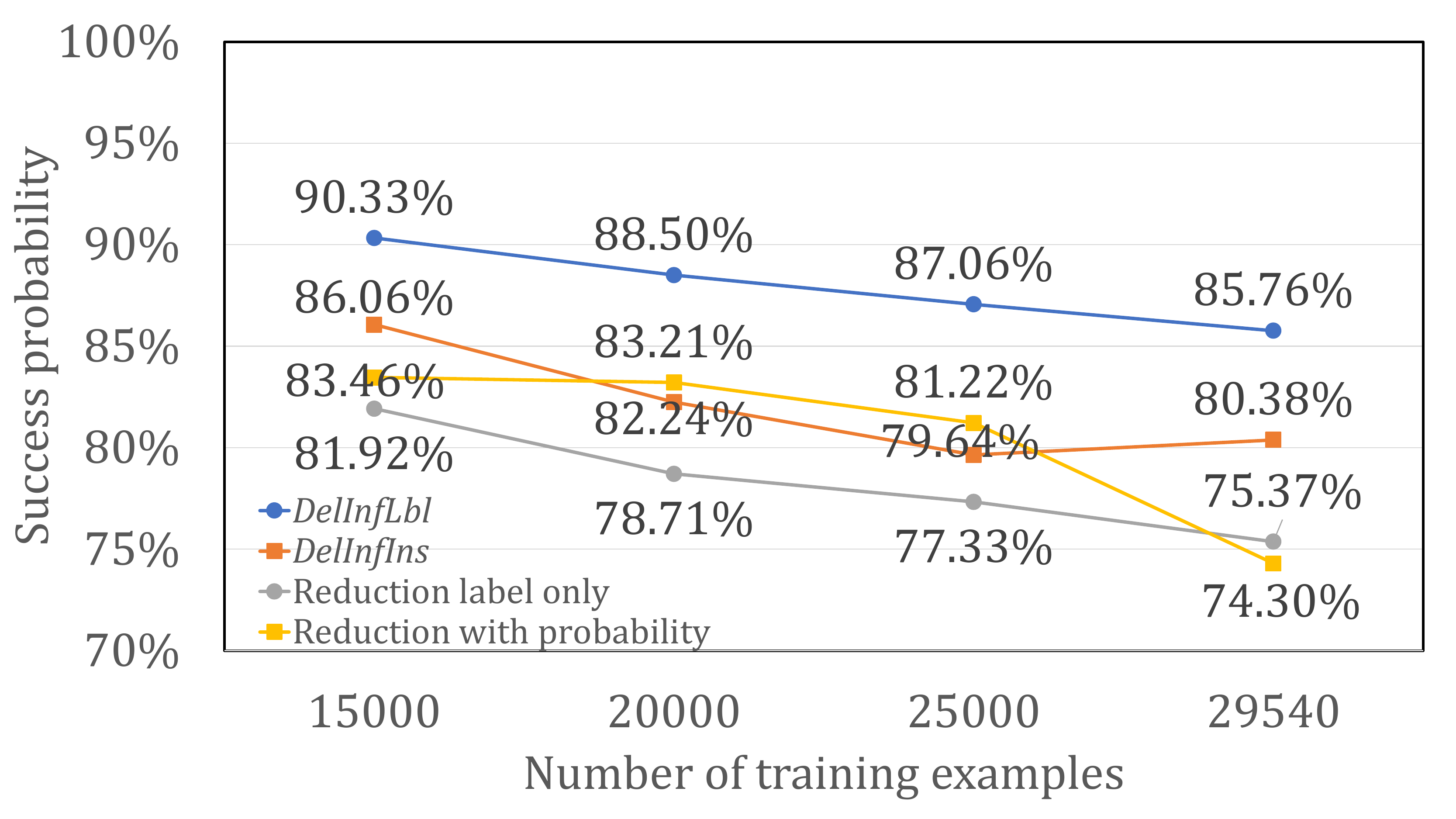}}
    \subfigure[]{\includegraphics[width = 0.48\textwidth]{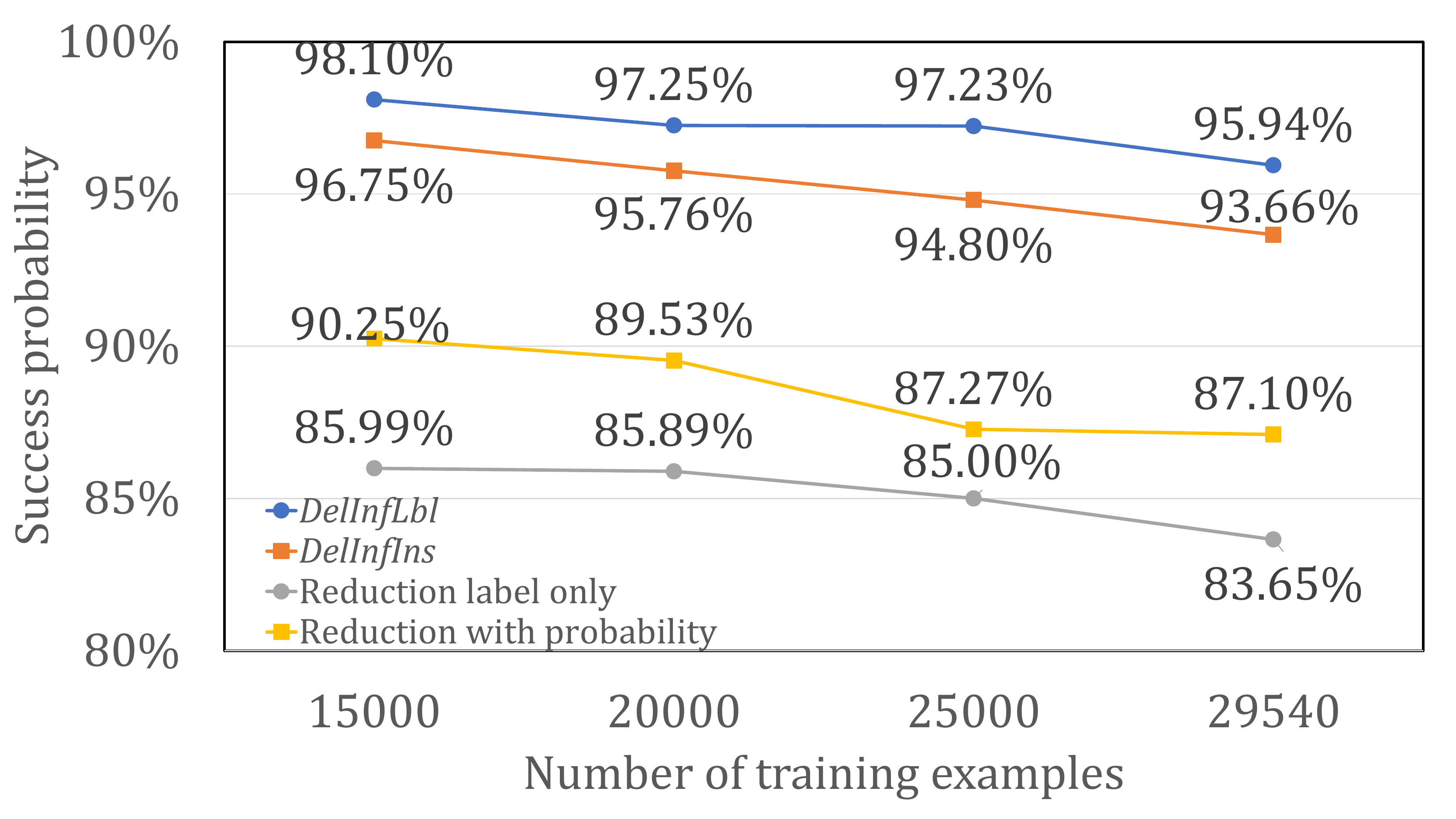}} 
    \caption{{Trend of success probabilities of attacks $\DInfWith$ and $\DInfNo$ on \textbf{VGG} models trained with different number of examples are shown; (a) uses dataset CIFAR-10 and (b) uses dataset CIFAR-100 dataset. The success probabilities are also compared with two baseline attacks that are obtained by reductions to the membership inference attack of \cite{shokri2017membership}.}}
    \label{fig:dataset_size_vgg}
\end{figure*}

% Our attack is on deletion inference. We also want to directly compare with membership inference attacks from \cite{shokri2017membership} and \cite{yeom2018privacy} on the deep neural networks. However, \cite{yeom2018privacy} doesn't share the code of the deep learning part. To compare with \cite{shokri2017membership}, our attack requires to train the model for many additional times, which is very inefficient for the attack to run.

% \Mnote{Fig 6: is VGG, our attack.

% Other things we need (all with 15K, 20K,  25K and 30K sizes).
% \begin{enumerate}
%     \item VGG, reduction to Shokri's membership inference attack using probabilities.
%     \item VGG, reduction to Shokri's membership inference attack using labels only.
%     \item SmallCNN, our attack
%     \item SmallCNN, reduction to Shokri's membership inference attack using probabilities.
%     \item SmallCNN, reduction to Shokri's membership inference attack using labels only.
% \end{enumerate}
% }

\begin{remark}[About using reduction to MI as baseline]
Here we comment on the limitations of membership inference as a baseline attack, as membership inference is not tuned to distinguishing between two points (one of which is guaranteed to be in the training set). Indeed, membership inference attackers only get only one instance as input, while our formalization of deletion inference gets two inputs. However, please note that we compare our deletion inference attackers to \emph{reductions} to membership inference adversaries. The reduction is allowed to call the MI adversary \emph{multiple} times. Indeed our reduction of the previous subsection calls the MI adversary twice, and this change makes the \emph{reduction} to MI (which is a DI adversary itself)  powerful enough to be able to win the DI inference game with probability close to 1, so long as its (regular) MI oracle   wins its own game with probability close to 1.
\end{remark}

\section{Deletion Reconstruction} \label{sec:del-Rec}
Section~\ref{sec:del-infer} focused on attacks that infer which   of the two given examples is the deleted one. A more devastating form of attack aims to \emph{reconstruct} the deleted example by querying the two models (before and after deletion). In this section, we show how to design such stronger attacks.
We  propose two types of reconstruction attacks on the deleted example. The first one focuses on reconstructing the deleted instance, while the second one focuses on reconstructing the  deleted label. Both types of attacks follow the same security game which is explained in the definition below.

\subsection{Threat Model}
\begin{definition}[Deletion reconstruction attacks] \label{def:Del-Recon}
Let $\Learn$ be a learning algorithm, $\Del$ be a deletion mechanism for $\Learn$, and $S_n$ be a distribution over $(\cX \times \cY)^n$.
Consider the following game played between the adversary $\Adv$ and challenger $\Chal$.
\begin{enumerate}
    \item {\bf Sampling the data and random selection.} $\Chal$ picks a dataset $\{e_1 \dots e_n\} = \cS \gets S_n$ of size $n$. 
    % by choosing $n$ i.i.d samples $e_0,\dots,e_n \gets D$. 
    It also chooses $i \gets [n]$ at random. %The challenger   picks $i\neq j$ at random and sends $(e_i,e_j)$ (but not the rest of $\cS$) to $\Adv$. 
        \item \label{Step:Chal} {\bf Oracle access  before deletion.}  The challenger $\Chal$   trains $\model \gets \Learn(\cS)$.
  The adversary $\Adv$ is then given \emph{oracle} access to $\model$. At the end of this step, the adversary instructs moving to the next step.
  
      \item {\bf Deletion.} The challenger obtains $\model_{-e_i} \gets \Del(h,e_i)$.%   and if $b=1$, then $\modelDel \gets \Del(h,e_j)$.
     \item {\bf Oracle access after deletion.}     
        The adversary $\Adv$   is now given (only) \emph{oracle} access to  $\model_{-e_i}$.
    
    \item {\bf Adversary's guess.} Adversary outputs a guess $e$. 
\end{enumerate}
 For a similarity metric  $\dis$ defined on $(\cX \times \cY)$, the adversary $\Adv$ is called a  \emph{$(\rho,\eps)$-successful deletion reconstruction} attack  if it holds that $\Pr[\dis(e,e_i) \leq \eps ]\geq \rho$. 
%  When  $\Pr[e \neq e_i  ]\geq \rho$, we simply call $\Adv$ $\rho$-successful. 
For bounded $\dis(\cdot,\cdot) \in [0,1]$ and an adversary $\Adv$, we define the  \emph{expected accuracy} of $\Adv$ as $1-\Ex[\dis(e,e_i)]$.
\end{definition}

\paragraph{Limited reconstruction attacks.} One can   use Definition~\ref{def:Del-Recon} to capture attacks in which  the goal of the adversary is to only (perhaps partially) reconstruct the instance $x$ or the label $y$. In  case of approximating $x$, we can use a metric distance $\dis$ that is only defined over $\cX$ and ignores the labels of $e$ and $e_i$. We refer to such attacks as \emph{deleted instance reconstruction} attacks. Similarly, by using a proper metric distance defined only over $\cY$, we can use Definition~\ref{def:Del-Recon} to obtain \emph{deleted label reconstruction} attacks. Finally, to \emph{completely} find $e$ (resp. $x$ or $y$)  we  use the 0-1   metric  $\dis(e,e')=\one[e \neq e']$ (resp. $\one[x \neq x']$ or $\one[y\neq y']$). 

One can also observe that Deletion Inference   can generically be reduced to Deletion Reconstruction.

\begin{theorem}[From reconstruction to inference] \label{thm:fromRecToInf}
Let $\Learn$ be a learning algorithm, $\Del$ be a deletion mechanism for $\Learn$,     $\dis$ be a distance metric over $(\cX \times \cY)$, and   $S_n$ be a distribution over $(\cX \times \cY)^n$. Suppose  there is a  $(\rho,\eps)$-successful PPT  reconstruction adversary against the scheme $(\Learn,\Del)$, and  
  $\Pr[\dis(e_0,e_1) > 2\eps] \geq 1-\delta$ where the probability is over sampling $e_0,e_1$ from the sampled dataset $\cS \gets S_n$.\footnote{For example, when $S_n$ consists of $n$ i.i.d. samples from $D$, $e_0,e_1$ are simply two independent samples from $D$.}
 Then,  $(\Learn,\Del)$ is $(\rho-\delta)$-insecure against deletion inference   over distribution $S_n$.
%against $\mathrm{PPT}^D$ adversaries who run in PPT given oracle access to samples from $D$. (In particular, the attack would run in PPT given only two random samples from $D$.)
\end{theorem}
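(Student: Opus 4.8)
The plan is to exhibit an efficient black-box reduction: I would build a deletion inference adversary $\Adv_{\mathsf{DI}}$ that runs the promised $(\rho,\eps)$-reconstruction adversary $\Adv_{\mathsf{rec}}$ internally. On input the two challenge examples $e_0,e_1$ and oracle access to $\model$ followed by $\modelDel$, the adversary $\Adv_{\mathsf{DI}}$ forwards exactly this oracle access to $\Adv_{\mathsf{rec}}$ (but does \emph{not} hand it $e_0,e_1$, since a reconstruction attacker in Definition~\ref{def:Del-Recon} receives no challenge examples). When $\Adv_{\mathsf{rec}}$ halts with a reconstructed guess $e$, the adversary $\Adv_{\mathsf{DI}}$ outputs $b' = \argmin_{c \in \bits} \dis(e, e_c)$, i.e., it guesses that the challenge closest to $e$ under the metric $\dis$ is the deleted one.

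The first thing I would verify is that the view $\Adv_{\mathsf{rec}}$ sees inside this reduction is distributed identically to its view in the genuine reconstruction game, so that its success guarantee carries over. The only point to check is that the deleted example is a uniformly random element of $\cS$: in Definition~\ref{def:WeakDelPriv} the challenger draws a random ordered pair of distinct indices and an independent uniform bit $b$, so for any fixed index $k$ the deleted element is $z_k$ with probability $\tfrac12\cdot\tfrac1n + \tfrac12\cdot\tfrac1n = \tfrac1n$, matching the uniform index $i\gets[n]$ of Definition~\ref{def:Del-Recon}. Since $\model,\modelDel$ are produced identically in both games as (randomized) functions of the dataset and the deleted element, I can conclude that $\Adv_{\mathsf{rec}}$ returns $e$ with $\dis(e,e_b)\le\eps$ with probability at least $\rho$; call this event $E_{\mathsf{rec}}$.

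The heart of the argument is a triangle-inequality step, which is exactly where the $2\eps$-separation hypothesis is used. Let $E_{\mathsf{sep}}$ be the event $\dis(e_0,e_1) > 2\eps$, which has probability at least $1-\delta$ by assumption (and its law over $e_0,e_1$ drawn from $\cS$ matches how the DI challenger samples them). On $E_{\mathsf{rec}}\cap E_{\mathsf{sep}}$, the metric property of $\dis$ gives $\dis(e,e_{1-b}) \ge \dis(e_0,e_1) - \dis(e,e_b) > 2\eps - \eps = \eps \ge \dis(e,e_b)$, so the unique closer challenge to $e$ is $e_b$ and hence $\Adv_{\mathsf{DI}}$ outputs $b'=b$. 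Thus $\Adv_{\mathsf{DI}}$ wins whenever both events hold, and the Bonferroni (union) bound yields success probability at least $\Pr[E_{\mathsf{rec}}] + \Pr[E_{\mathsf{sep}}] - 1 \ge \rho + (1-\delta) - 1 = \rho - \delta$, which is precisely $(\rho-\delta)$-insecurity against deletion inference.

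I do not expect a serious obstacle: the reduction is one-shot, black-box, and clearly PPT, so the whole content reduces to the two care points above. The mildly delicate one is confirming the distribution-matching of the deleted item (so that $\Adv_{\mathsf{rec}}$ truly behaves as its guarantee promises when embedded in the inference game), and the slickest point is that the factor $2$ in the hypothesis $\dis(e_0,e_1)>2\eps$ is exactly what the triangle inequality consumes to force the reconstructed point strictly closer to the deleted challenge than to the surviving one. Note this step genuinely needs $\dis$ to be a \emph{metric} and not merely an arbitrary dissimilarity, which is why that requirement appears in the statement.
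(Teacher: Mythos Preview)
Your proof is correct and follows essentially the same route as the paper: run the reconstruction adversary, compare its output to the two challenges, and combine the reconstruction-success event with the $2\eps$-separation event via a union bound to get $\rho-\delta$. The only cosmetic difference is your decision rule ($\argmin_c \dis(e,e_c)$) versus the paper's threshold rule (output $0$ if $\dis(e,e_0)\le\eps$, else $1$ if $\dis(e,e_1)\le\eps$, else random), and you make explicit the distribution-matching check and the triangle-inequality step that the paper leaves implicit.
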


\begin{proof} 
We give a polynomial time reduction. In particular, suppose $B$ is a (black-box) adversary that shows the  $(\rho,\eps)$ insecurity of the scheme $(\Learn,\Del)$ against deletion  reconstruction attacks. We design an adversary $\Adv$ against deletion inference (as in Definition~\ref{def:WeakDelPriv}) as follows. Given $(e_0,e_1)$  as challenges, first ignore $(e_0,e_1)$ and using oracle access to models $\model,\modelDel$, run $B$ to obtain $e$ as approximation of the deleted example. Output $0$ if $\dis(e_0,e)\leq \eps$,  else  output $1$ if $\dis(e_1,e)\leq \eps$,  otherwise output   uniformly  in $\bits$.

We now analyze the reduction above. With probability at least $\rho$ over the execution of the attack $B$, it holds that $\dis(e,e_b) \leq \eps$, where $e_b$ is the  deleted example. Also, with probability $1-\delta$ it holds that $\dis(e_0,e_1)>2\eps$. By a union bound, we have that with probability  at least $\rho-\delta$ both of the conditions above happen at the same time, in which case the adversary $\Adv$ outputs the correct answer $b$. 
%Since $\Adv$ is winning in the game of Definition~\ref{def:WeakDelPriv} with probability $\rho-\delta$, it proves that the scheme is $\lambda$-insecure where  $(1+\lambda)/2 = \rho-\delta$, which is equivalent to $\lambda = 2\rho-2\delta-1$.
\end{proof}

% See Appendix~\ref{sec:proofs} for the proof.
%
% We now formally show that sufficiently strong attacks according to Definition~\ref{def:Del-Recon} will imply deletion inference attacks (Definition~\ref{def:WeakDelPriv}). A simple reduction exists that given $e_i = (x_i, y_i)$ and $e_j = (x_j, y_j)$, adversary runs the strong adversary for Definition~\ref{def:Del-Recon} and guesses $x$. The adversary then returns $0$ if $x_i = x$, return $1$ if $x_j = x$, and otherwise return the result of a random coin flip.
%
Due to the theorem above, all the reconstruction attacks below can be seen as strengthening of deletion inference attacks.

\subsection{ Deletion Reconstruction of Instances for Nearest Neighbor} \label{sec:Singleton}
In this experiment, we consider a classification clustering task in high dimension.
%where each class happens to have only one training example. This scenario is defined and studied in \cite{feldman2020does,brown2021memorization} as the ``singleton'' case. 
The previous work \cite{feldman2020does,brown2021memorization} studied the same setting and showed that machine learning models sometimes need to memorize their training set in order to learn with high accuracy. In this setting, we extend the attacks of \cite{feldman2020does,brown2021memorization} into two directions to obtain deletion reconstruction attacks: (1) we obtain polynomial time  attacks that extract instances rather than proving mutual information between the model and the examples, (2) we show a setting where the extraction is enabled after the \emph{deletion}.  

\paragraph{Roadmap and the leakage of the deletion.} We develop polynomial-time reconstruction attacks that crucially leverage the deletion operation. However, in order to analyze our attacks,  we first \emph{limit} ourselves to the so-called \emph{singleton} setting in which each label appears at most once for an example in the dataset (Section~\ref{sec:singletonTheory}). Focusing on this case allows us to provide theoretical ideas that support our attacks.  However, our  attacks in the singleton case are also able to extract instances \emph{even} without deletion. Hence, in the singleton case, our attacks can be seen as leakage of the model $\model$ itself, \emph{even without deletion}. Note that such attacks can still be used for deletion reconstruction, they do not reflect the \emph{extra leakage} of the deletion operation. Nevertheless, we next experimentally show (Section~\ref{sec:visual}) that virtually the same polynomial-time attacks succeed even when the labels are not unique on the real world dataset Omniglot. In particular, when we have many repeated labels (perhaps even as neighbor cells), then our simpole attacks do \emph{not} extract the instances from access to either of $\model,\modelDel$, and it is needed to have access to \emph{both} models to find the ``vanished'' Voronoi cell before extracting the center of the cell.

%Our work studies the same singleton setting and   extends the works \cite{feldman2020does,brown2021memorization} by showing that the memorization effect could become much worse in leaking information about a particular record, once that record goes through a deletion procedure.

We now our polynomial-time deletion reconstruction attack for the case of 1-nearest neighbor models. We work with instance space $\cX=\{0, 1\}^d$.\footnote{We use binary features because it is more general and that other features can also be represented in the form of binary strings.} We also assume the learner $\Learn$ runs a $1$-nearest neighbor algorithm. Namely for $\model  = \Learn(\cS)$ where $\cS = \{(x_1, y_1) \dots (x_n, y_n)\}$, we have $\model(x) = y_j$ where $j= \argmin_{i} \dis(x, x_i)$. 
%We break ties by outputting the smallest index $i$, if multiple nearest neighbors exist.

\remove{
\begin{definition}[Adapted security game] \label{def:adapted}
The security game follows the deletion reconstruction setting of Definition~\ref{def:Del-Recon}, with the following specific choices.
\begin{compactitem}
    \item $\cX$ is   $\{0, 1\}^d$.\footnote{We use binary features because it is more general and that other features can also be represented in the form of binary strings. However, we also work with uniform distribution over binary features.}
    \item $S_n$ returns a set of $n$ examples $\{(x_1, 1) \dots (x_n, n)\}$, where each $x_i \gets \{0, 1\}^d$ is a uniformly random string and the label $y_i$ is equal to $i$.
    \item The learner $\Learn$ runs a $1$-nearest neighbor algorithm. Namely for $\model  = \Learn(\cS)$ where $\cS = \{(x_1, 1) \dots (x_n, n)\}$, we have $\model(x) = \argmin_{i} \dis(x, x_i)$. We break ties by outputting the smallest index $i$, if multiple nearest neighbors exist. \qedhere
\end{compactitem}
\end{definition} 
}

We propose the following attack $\DelRecon$ that aims to reconstruct the deleted instance $x_i$. 

\begin{alg}[Attack $\DelRecon$]  \label{alg:recon_general}
% The attack is defined with a combination function $\textbf{Combine}$, which gives an prediction from a subset of examples. 
Suppose the adversary is given oracle access to $\model$ followed by oracle access to $\modelDel$, along with an auxiliary set of instances $\cT$, $|\cT|=m$. (For example, $\cT$ could simply be $m$ independent samples different from the original training set $\cS$.) The attack then proceeds as follows:
\begin{itemize}
    \item For all $x \in \cT$ query the model $\model$. 
    \item Then for all $x \in \cT$, query the model $\modelDel$. 
    \item Create the set of points in the ``deleted region'': $\cT' = \left\{ x \mid \model(x) \neq \modelDel(x), x \in \cT \right\}$.
    % \item Return $x' = \textbf{Combine}(\cT')$.
    \item Return the majority for each coordinate; namely, return $x = (b'_1, \dots, b'_d)$, where  $\forall i \in [d]$, 
    $$b'_i = \argmax_{b \in \{0, 1\}} \sum_{(b_1, \dots, b_d) \in \cT'} \one [b_i = b].$$ 
\end{itemize}
\end{alg}

\paragraph{Intuition behind the attack.}  The   intuition behind the attack of Algorithm~\ref{alg:recon_general} is that instances like $x$ whose prediction label changes during the deletion process should belong to the Voronoi cell centered at $x_i$, where $(x_i,y_i)$ is the deleted example. Then the algorithm heuristically assumes that when we pick $x$ at random \emph{conditioned} on changed labels, then they give a pseudo-random distribution inside the Voronoi cell of $x_i$.
In the next section we show that for a natural case called singletons, in which the labels are unique, this intuition carries over formally. We then experimentally verify our attack for the general case (when labels can repeat) on a real data set.

%It remains to explain why we take majority to find the desired bits. The following lemma shows that when we get a random sample inside a Voronoi cell centered at a nearest neighbor $x_i$, then the $j$'th bit of this sample cannot be negatively correlated with the $j$'th bit of $x_i$.

%The   lemma below shows that unless points inside $\cC_i$ have an \emph{exactly equal} number of zeros and ones in their $j$'th coordinate, the majority operation of Algorithm~\ref{alg:recon_general} would eventually find the right answer, once it is given enough number of fresh samples in its set $\cT$.

\subsubsection{Theoretical Analysis for Uniform Singletons} \label{sec:singletonTheory}
In this section, we focus on a theoretically natural case to analyze the attack of Algorithm~\ref{alg:recon_general}. We refer to this case as the \emph{uniform singletons} which is also studied in~\cite{feldman2020does,brown2021memorization}   and is as follows. First, we assume that instances are uniformly distributed in $\bits^d$, and secondly, we assume that the labels are unique (i.e., without loss of generality, the labels $y_1,\dots,y_n$ are just $1,\dots,n$). The following lemma shows that in this case,  Algorithm~\ref{alg:recon_general} never converges to wrong answers for \emph{any} coordinate of the instances.
%The   lemma below shows that unless points inside $\cC_i$ have an \emph{exactly equal} number of zeros and ones in their $j$'th coordinate, the majority operation of Algorithm~\ref{alg:recon_general} would eventually find the right answer, once it is given enough number of fresh samples in its set $\cT$.

\begin{lemma}[Non-negative correlations] \label{lem:nonNegative}
Let  $\cS = \{x_1,\dots,x_n\}$ where $\forall i, x_i \in \bits^d$, and suppose  $\model(x) = \argmin_{i} \dis(x, x_i)$, and we break ties by outputting the smallest index $i$, if multiple nearest neighbors exist. Suppose $\cC_i = \set{x \mid h(x) = i}$ be the Voronoi cell centered at $x_i$. Let $x[j]$ be the $j$'th bit of $x$. Then, for every $i \in [n]$ and every $j \in [d]$, we have
$$\Pr_{x \gets \cC_i}[x[j] = x_i[j]] \geq \frac{1}{2}. $$
\end{lemma}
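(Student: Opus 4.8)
The plan is to prove the statement by a coordinate-flip injection, a shifting/compression argument that formalizes the intuition that Voronoi cells in Hamming space are ``pulled toward'' their centers coordinate-by-coordinate. Fix the center index $i \in [n]$ and the coordinate $j \in [d]$. I would partition the cell $\cC_i$ into the points that \emph{agree} with the center on coordinate $j$, namely $B = \{x \in \cC_i : x[j] = x_i[j]\}$, and the points that \emph{disagree}, $A = \{x \in \cC_i : x[j] \neq x_i[j]\}$. Since $\Pr_{x \gets \cC_i}[x[j] = x_i[j]] = |B| / |\cC_i|$ and $A, B$ partition $\cC_i$, it suffices to prove the counting bound $|B| \geq |A|$. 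Note that uniformity over $\bits^d$ plays no role in this lemma; it enters only later when cell counts are turned into probabilities in the full attack analysis, so the claim reduces purely to this combinatorial inequality.

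To establish $|B| \geq |A|$, I would let $\sigma_j$ denote the involution on $\bits^d$ that flips coordinate $j$ and fixes every other coordinate. Since $\sigma_j$ is a bijection it is injective, so it is enough to prove the single key claim that $\sigma_j(A) \subseteq B$; this yields $|A| = |\sigma_j(A)| \leq |B|$ and hence the lemma. Concretely, the claim says that flipping a ``disagreeing'' bit back to the center's value keeps the point inside the cell.

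The core of the argument is verifying this claim. Take $x \in A$ and set $x' = \sigma_j(x)$, so $x'[j] = x_i[j]$ while $x$ and $x'$ agree on all other coordinates. I would track how the Hamming distance $\dis(\cdot, x_k)$ to each center changes under the flip by a case analysis on the bit $x_k[j]$: because $x[j]$ is the complement of $x_i[j]$, moving from $x$ to $x'$ \emph{decreases} $\dis(\cdot, x_k)$ by one exactly for the centers with $x_k[j] = x_i[j]$ (this includes $k = i$, so $\dis(x', x_i) = \dis(x, x_i) - 1$), and \emph{increases} it by one for the centers with $x_k[j] \neq x_i[j]$. Combining this with the membership condition for $x \in \cC_i$ under smallest-index tie-breaking, namely $\dis(x, x_i) < \dis(x, x_k)$ for $k < i$ and $\dis(x, x_i) \leq \dis(x, x_k)$ for $k > i$, I would check in each case that $\dis(x', x_i) \leq \dis(x', x_k)$, strictly whenever $k < i$. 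When $x_k[j] = x_i[j]$ both distances drop by exactly one, preserving the (strict where needed) inequality; when $x_k[j] \neq x_i[j]$ the distance to $x_i$ drops while the distance to $x_k$ rises, so the inequality becomes strict and can only help. Thus $x_i$ remains the tie-broken nearest neighbor of $x'$, i.e.\ $x' \in \cC_i$, and since $x'[j] = x_i[j]$ we get $x' \in B$, proving $\sigma_j(A) \subseteq B$.

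I expect the main obstacle to be the bookkeeping in this case analysis, in particular handling the tie-breaking correctly: one must confirm that the flip never converts a strict advantage of $x_i$ over a lower-indexed $x_k$ into a mere tie, and never lets a higher-indexed $x_k$ overtake $x_i$. Both follow because integer Hamming distances change by precisely $\pm 1$ under a single-bit flip and the defining inequality of $\cC_i$ carries exactly the right strictness on each side of $i$, but this is the step that demands care rather than cleverness.
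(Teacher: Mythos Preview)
Your proposal is correct and is essentially the same argument as the paper's: both partition $\cC_i$ by the value of coordinate $j$, flip that coordinate on the ``disagreeing'' half, and observe that the flip is an injection into the ``agreeing'' half because it moves the point one step closer to $x_i$ while not helping any competing center. Your treatment of the tie-breaking case analysis (separating $k<i$ from $k>i$ and tracking the $\pm 1$ change according to $x_k[j]$) is in fact more explicit than the paper's, which dispatches the boundary case in a single sentence.
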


\begin{proof}[Proof of Lemma~\ref{lem:nonNegative}]
Let $\cC^{j,b}_i = \set{x \in \cC_i \mid x[j]=b}$ be the subset of $ \cC_i$ that has $b$ in its $j$'th coordinate. 

We claim that by flipping the $j$'th bit of every $x \in \cC^{j,1-x_i[j]}$, we obtain a vector $x' \in \cC^{j,x_i[j]}$. The reason is as follows. (1)  By definition, the $j$'th bit of $x'$ is indeed $x_i[j]$. (2) It holds that $h(x')=i$, which means $x' \in C_i$. The reason for (2) is that, by flipping the $j$'th bit of $x$, $x'$ gets one step \emph{closer} to $x_i$ compared to how far $x$ was from $x_i$. Therefore, if $x_i$ was the nearest neighbor of $x$, it would also be the nearest neighbor of $x'$ as well. A boundary case occurs if multiple points are the nearest points of $x$, but the same tie breaking rule still assigns $x_i$ as the nearest neighbor of $x'$.
Since the mapping from $x$ to $x'$ is injective, it also gives an injective mapping from $ \left|\cC^{j,1-x_i[j]}_i\right|$ to $\left|\cC^{j,x_i[j]}_i\right|$. This proves that 
$$\left|\cC^{j,x_i[j]}_i\right| \geq \left|\cC^{j,1-x_i[j]}_i\right|,$$
which is equivalent to $\Pr_{x \gets \cC_i}[x[j] = x_i[j]] \geq 1/2$.
\end{proof}

{
\subsubsection{Experiments: Deleted Image Reconstruction for 1-NN} \label{sec:visual}
  We now show that the simple attack of Algorithm~\ref{alg:recon_general} can be used to reconstruct visually recognizable images even when the distribution is not normal and labels are not unique. Hence, we conclude that the actual power of this attack goes beyond the theoretical analysis of the previous section.
  We use the Omniglot~\cite{lake2015human} dataset, a symbol classification dataset specialized for few-shot learning. The dataset includes handwritten symbols from multiple languages.

  %This dataset still satisfies the singleton property, and extending this attack on real data beyond the singleton case is an intriguing problem for the future.
% \Mnote{Ji: please update the text below to only include the experiments with multiple labels. Note that many details shall stay, if they are not changing. But the same details are currently NOT provided for the case of 5-labels in the next section.}
\paragraph{Experiment details.} We binarize each pixel of the dataset to remove the noise in gray-scale. The input space is $\cX = \{0, 1\}^d$, where $d = 11025$ is the number of pixels.
% Mohammad: here the "blue" part begins.
{\color{black}{}  We assume the Omniglot dataset 
% is an i.i.d sample from a hidden distribution $\cD$, which is further 
is divided into two parts: (1) a training subset which contains $140$ symbols from  $30$ different languages. The languages serve  as the class label in the dataset in our experiments,\footnote{Note that in the original dataset, the labels reflect the character, but to demonstrate the leakage of \emph{deletion} rather than the mere leakage of datasets alone, we use the labels that represent the languages to increase the frequency of the labels.} and (2) a fixed test set with another $140$ examples from each language which is provided to the adversary as auxiliary information.
The learning algorithm $\Learn$ is the $1$-nearest neighbor predictor, which for a dataset $\cS$ always returns the label (i.e., the language) of the nearest example in the dataset $\model(x') = \argmin_y\{ \dis(x, x') \mid (x, y) \in \cS\}$. We use Algorithm~\ref{alg:recon_general} as the attack, which simply takes majority on each pixel over the instances that fall into the disagreement region of the two models (before and after deletion). 
% Membership
%
We run the security game of Definition~\ref{def:Del-Recon} with $100$ random images from the dataset as the deleted image. 

\paragraph{Comparison with reconstruction attacks  without deletion.} As a comparison to further highlight the leakage that happens due to the deletion, we also run a similar reconstruction attack \emph{without} deletion. Suppose for a moment that labels were unique. Then, to reconstruct instance $x$, the attacker aims to extract the image $x$ from the data set with label $y$, where $y$ is the label of $x$. To do that, the reconstruction attacker can run the same exact attack as our deletion reconstruction, as follows: it tests all the images in the test dataset on the model and records every image with label $y$. The attacker then generate a reconstruction image by taking the majority of the images with label $y$ on every pixel. 

When the labels are unique, this reconstruction attack can reconstruct the instances used by a 1-NN just like how our deletion reconstruction attack does and succeeds.   However, in our case labels are not unique. Hence, we use this attack as the baseline to show how much our deletion inference attack is in fact extracting information that is the result of the deletion operation.

The result of our deletion reconstruction and the baseline (non-deletion) reconstruction attacks are  shown in Figure~\ref{fig:knn_new}. Our deletion reconstruction algorithm reconstructs $40$ out of $100$ images, due to page limit, $33$ of them is shown in Figure~\ref{fig:knn_new}. As is clear from the pictures, the non-deletion reconstruction attack gives no meaningful result in our setting. { More concretely, for $35$ of the $40$ images the label of the deletion reconstruction attack obtains the the correct label when fed back into the nearest neighbor classifier, while only $1$ of the images generated by the attack without deletion obtains the correct label.}

% \paragraph{Results.} 
% The result of this simple deletion reconstruction is shown in Figure~\ref{fig:recon_fig}. The result shows that this simple adversary successfully reconstructs the majority of the examples in the dataset in a visually recognizable way. { More concretely, 39 out of the 50 images in the image get the correct label when fed into the nearest neighbor classifier.}

% \begin{figure}[ht]
%     \centering
%     \includegraphics[width=0.38\textwidth]{}
%     \caption{50 reconstructed images on Omniglot dataset. For each pair of images, the upper row is the reconstructed example, while the lower row is the actual deleted example.}
%     \label{fig:recon_fig}
% \end{figure}
% \Jnote{Probably better to replace the previous result completely with this new result. } \Mnote{Yes, 4.1.3 and 4.1.4 shall be merged. See my comment above about how to do this.}

% \subsubsection{Deletion reconstruction of images}
%  In addition to the singleton case, we test our reconstruction adversary on the case where $5$ instances are included in the dataset for each symbol (i.e., class label). Similar to the previous experiment, $\Learn$ is still an $1$-nearest neighbor predictor, and we also use the majority attack of Algorithm~\ref{alg:recon_general}.

% \paragraph{Results.} With $100$ rounds of the experiment, our simple deletion reconstruction adversary reconstructs $33$ of the examples, and $29$ of them have the same label with the deleted examples. The result is presented in Figure~\ref{fig:knn_new}.

\begin{figure}
    \centering
    \includegraphics[width = 0.48\textwidth]{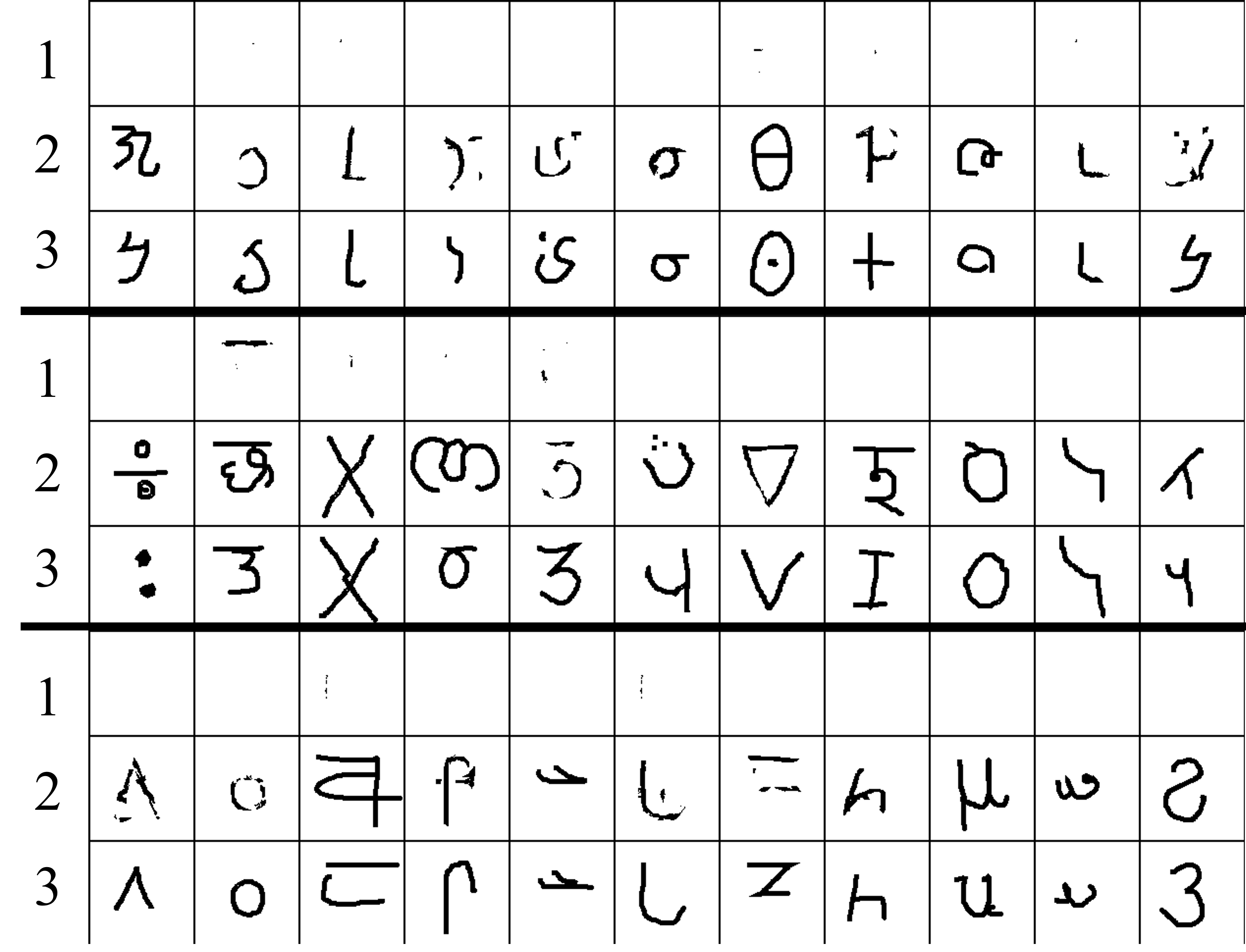}
    \caption{$33$ reconstruction examples on Omniglot dataset. In the figure, Row 1 is the result from the attack without deletion, Row 2 is the result from the deletion reconstruction attack, and Row 3 is the deleted example, which is the target of the attack.}
    \label{fig:knn_new}
\end{figure}
}
}

\subsubsection{Experiments:  Deleted Sentence Reconstruction for  Language Models}

\label{sec:recon_language}
% The reconstruction attack of Section \ref{sec:Singleton} focused on recovering the test instance. 
  In this experiment we perform  {reconstruction attacks} on sequential text data. Namely, we show how to extract the deleted sentence by querying a language model according to the security game of Definition \ref{def:Del-Recon}. 
  
  We start by giving formal definitions. We define a text sequence as $\mathbf{x} = (x_1, x_2, \dots, x_t)  \in \cX^t$,  where each $x_i \in \cD$ is a word,  $\cD$  is a set of words that  shapes a predefined dictionary. 
% A language model $f: \cX^l \rightarrow [0, 1]$ takes a text sequence as input and returns the predicted probability (as a number between $0$ and $1$) of the input as the output.
%
%
A (next-step) language model is a generative model which models the probability $\Pr[\mathbf{x}]$ by applying the chain rule $\Pr[\mathbf{x}] = \Pi_{1}^t \Pr[x_i \mid x_1, x_2, \dots ,x_{i-1}] $. Specifically, a next-step language model $f$ takes a prefix of the text sequence $x_1,\dots,x_i$ as input, and with the parameter $\theta$ it returns the likelihood $f_\theta(x_i \mid x_1, \dots x_{i-1})$ that ideally equals $\Pr[x_i \mid x_1, x_2, \dots ,x_{i-1}]$. 
As an example, an $N$-gram language model models the mentioned probabilities with a Markov chain, and it approximates $\Pr[x_i \mid x_1, \dots ,x_{i-1}]$ with the estimated probability of $N-1$ previous words, i.e., $\Pr[x_i \mid x_{i - N +1}, \dots ,x_{i-1}]$ {($N$ contiguous words $x_{i-N+1}, \dots, x_i$ is called an $N$-gram)}. Specifically a bigram language model ($N=2$) follows $f_\textbf{bi}[x_i \mid x_1, \dots ,x_{i-1}] \approx \Pr[x_i \mid x_{i-1}]$ and a trigram language model ($N=3$) follows $f_\textbf{tri}[x_i \mid x_1, \dots ,x_{i-1}] \approx \Pr[x_i \mid x_{i-2}, x_{i-1}]$.
In the training of the language model, a training dataset $\cS$ with multiple sequences is given. {The language model parameter $\theta$ is optimized to maximize the overall likelihood of sequences in the training dataset, that is, the probability of returning the dataset given such $N$-gram probability.}
% \Mnote{this is not quite right. the model can always return 1, if maximizing is the only goal.}

\paragraph{Threat model.} 
In general, we follow the security game described in Definition \ref{def:Del-Recon}. Namely, we first train the language model with a dataset $\cS$. In the deletion step, we delete a random sequence $\mathbf{x'} = x'_1, \dots ,x'_{i-1}$ from the dataset and retrain the model. Finally, the adversary aims to reconstruct the example $\mathbf{x'}$. Note that black-box access by the adversary means that it can send an text sequence $x_1, x_2, \dots, x_t$ to the language model and gets the probability of the text sequence $f_{\theta}(x_1, x_2, \dots, x_t)$.

\paragraph{Our deleted sentence reconstruction attack.} We now define a simple adversary that can accurately reconstruct the deleted sentence. It first simply queries every possible $N$-grams in the dictionary to the model $\model$ and records their probabilities. Then after deletion, it again sends every possible $N$-grams queries to the model $\modelDel$. 
Now suppose $N=2$, i.e. bigram. According to the definition of language models, for a word pair $(x'_{i-1}, x'_i)$, if $\model(x'_{i-1}, x'_i) > \modelDel(x'_{i-1}, x'_i)$, then the number of occurrence of the bigram $x'_{i-1}, x'_i$ is decreased in the updated dataset, which further indicates the bigram is included in the deleted example. {Therefore, for one particular suffix $x'_{i-1}$, the adversary can guess a word $x'_i$ which satisfies that $\model(x'_{i-1}, x'_i) > \modelDel(x'_{i-1}, x'_i)$.}
% \Mnote{who gives this input? this is not clear. Do you mean for every $x'_{i-1}$ we find the corresponding $x'_i$?} the adversary will randomly pick one particular \Mnote{randomly or a particular one?!} $x'_i$ that satisfies $\model(x'_{i-1}, x'_i) > \modelDel(x'_{i-1}, x'_i)$.

We then propose a heuristic approach to reconstruct the deleted text sequence. First, we abstract the problem into a search problem defined on a graph, where each node is a $N$-gram that satisfies $$\model(x'_{i-N+1}, \dots, x'_i) > \modelDel(x'_{i-N +1}, \dots, x'_i).$$ We draw a directed edge from an $N$-gram node $v_i$ to $v_j$ if and only if the last $N-1$ words of $v_i$ is the first $N-1$ words of $v_j$ with the same order. Then each path in the graph represents a sentence.
% \Pnote{is this the first $n-1$ words of $v_j$?}.
We then search to find a Hamiltonian path in the generated graph. 
%by a variant of depth-first search (DFS). 
Note that it is possible that the deleted sentence includes a specific $N$-gram with multiplicity more than one. We then allow the ``Hamiltonian'' path  to tolerate a limited number of repetitive visits to a node. Finally, we return the shortest traverse path found, i.e., with the fewest number of repetitions. To implement this attack we use a recursive algorithm to traverse the nodes of the graph while we maintain the number of times that the current path has visited each node.

\paragraph{Experiment details.} We  perform our attacks on unigram, bigram, and trigram language models. We train the language models on the Penn Treebank Corpus~\cite{marcus1993building}. After regular preprocessing, the dataset includes 42068 text sequences, which includes 971657   words and 10001 unique words.
We use two metrics to evaluate our attacks.

\begin{itemize}
    \item Success rate: Probability that the adversary reconstructs a sequence $x$ completely, when $x$ is chosen at random from $\cS$, it is deleted, and then the adversary is able to extract $x$ by first interacting with $\model$ and then with $\modelD{x}$.
    %that is, all the with the same order, i.e. , $\Pr_{x \in \cS}[\Adv(x) \text{ outputs } x]$. \Mnote{In the probability here, do you mean $x$ is deleted and $\Adv$ is given access to $\modelDel$? This should be reflected in the probability.} \Jnote{Please take a look at my revision here.}
\item F1 score of the reconstruction: Let the reconstructed sequence of the adversary $ \Adv$  be $x'$ and the deleted sequence be $x$. Let's treat both of them as unordered multisets. Then the F1 score of the reconstruction measures the quality of the reconstruction by balancing the precision and recall of the prediction, namely, 
$$F1 = \frac{2}{\frac{1}{\mathrm{Precision}} + \frac{1}{\mathrm{Recall}}} = \frac{2|x \cap x'|}{ (|x|+|x'|)}.$$ 
which is equal to $1$ if and only if $x=x'$ (as multisets).
\end{itemize}

We then repeat the security game for 1000 times (i.e., each time a random sentence is deleted), and measure the two metrics on three language models.

\paragraph{Results.} We present the experimental result on the three language models in Table \ref{tab:language_model}. Note that the unigram language model does not store anything on order, so it is impossible to reconstruct the full sequence in the correct order. Our defined reconstruction attack gets 99\% on the bigram and trigram models on the F1 score and   successfully reconstruct 97\% of the sequence with correct words and correct order on the trigram model.

\begin{table}[ht]
\centering
\begin{tabular}{l|l|l}
\hline
        & Success rate & F1 score\\ \hline
        
unigram & \textbackslash &  93.76\% \\ \hline
bigram  & 62.00\%                                             & 99.72\%                                    \\ \hline
trigram  & 97.30\%                                             & 99.90\%                                    \\ \hline
\end{tabular}%
% \captionsetup{justification=centering}
\caption{Reconstruction attack on language models.}
\label{tab:language_model}
\end{table}

%For unigram language model, metric 1 doesn't apply. It is because an unigram language model does not capture the order of words, which makes attacker unable to reconstruct the sequence.

\paragraph{Leakage of deletion.} Note that without deletion, even if the adversary can fully reconstruct the $N$-gram model, the adversary only has the probability of $N$-grams, which is an aggregation over all the $N$-grams in the dataset. Although the adversary has those $N$-grams, it is still hard to get specific private information when the dataset is large. However, we show that when deletion happens, by tracing the \emph{changes} in the probabilities during the deletion, an adversary can extract the \emph{full deleted sequence} (of length longer than $N$) with high probability, completely revealing the deleted sequence.

\subsection{Deleted Label Reconstruction} \label{sec:delLabelRec}
% We now turn to classification, and show that in this setting,
We now show that when the dataset is small, the label $y$ for the deleted example $e=(x,y)$ might completely leak through a black-box access to the models before and after the deletion. Note that when $y$ is binary, there is little difference between label \emph{inference} and \emph{reconstruction}, but our attacks work even when the labels are not binary, hence it is suitable to call them label reconstruction attacks as defined in Definition \ref{def:Del-Recon}.

We propose the following attack $\LabelInf$ to reconstruct the deleted label.

\begin{alg}[Attack $\LabelInf$]
Given models $\model$ and $\modelDel$, a number $n \in \mathbb{N}$, the label inference attacker $\LabelInf$  proceeds as follows:
\begin{enumerate}
    \item Randomly pick $m$ random samples $\cT = \{x_1, x_2, \dots, x_m\}$ in the data range $\cX$.
    \item For all $i \in [m]$, query   $\model$  to obtain $\hat{y}_i = \model(x_i)$.
    \item For all $i \in [m]$, query   $\modelDel$ to  obtain $\hat{y}'_i = \modelDel(x_i)$.
    \item Return 
    $\argmin_{c \in \cY} 
    \sum_{i=1}^m (\Pr[\hat{y}'_i = c] - \Pr[\hat{y}_i = c]).$
\end{enumerate}
\label{alg:labelinf}
\end{alg}

%\noindent{\bf Intuition behind the attack.} 
The intuition   is that for natural models (e.g., ERM rule), removing one example of a specific class will tend to move the prediction towards other classes, i.e., the expectation of predictions to that specific label is likely to decrease. In other words, if the attack above fails, it means that adding this deleted sample back to the training set will let the model tend to predict other classes, which is an unlikely scenario. Our experiments confirm that this attack intuition succeeds.

\paragraph{Experiment details.}
% In this attack, the adversary tries to guess the label of the deleted sample $y_\e$ on classification models, where 
% In this experiment we assume the data sets are i.i.d. samples of the hidden distribution $D$. 
We test the attack on three  classification datasets, including the Iris Dataset \cite{fisher1936use}, the Wine Recognition dataset \cite{aeberhard1994comparative}, and the Breast Cancer Wisconsin Diagnosis dataset \cite{street1993nuclear}. The label is among a discrete set $\cY$. The learning algorithms are the logistic regression model and $K$-Nearest Neighbor model.
 The experiment result is presented in Table~\ref{tab:res-new}. The success probability of the attack is higher than $90\%$ on the Iris and Wine datasets, and is higher than $75\%$ on Breast Cancer dataset.
\begin{table}[ht]
\begin{tabular}{llll}
\hline
                    & Iris    & Wine    & Breast Cancer \\ \hline
Logistic Regression & 92.90\% & 97.30\% & 86.60\%       \\ \hline
K Nearest Neighbor  & 93.70\% & 90.10\% & 77.80\%      \\ \hline
\end{tabular}
\centering
\caption{Results of our deleted label reconstruction attacks}
\label{tab:res-new}
\end{table}

\remove{
\section{Deleted Data Approximation Attack (on $K$-NN model)}

\noindent{\bf Attack's setting and the success criteria.} In this experiment, the goal of the adversary is to approximate the whole vector of the deleted sample $x_e$ as an example in high dimension. We perform our experimental attack  on the K-Nearest-Neighbors (K-NN), also one of the most basic machine learning approaches. K-NN model predicts the label of a sample by taking average of the labels of $K$ nearest neighbors of that sample. We test the attack on two classic classification datasets, the Iris Dataset \cite{fisher1936use} and the Wine Recognition dataset \cite{aeberhard1994comparative}. For each dataset, we train the model $h$ following the whole dataset with $K=3$. We then randomly pick a example $e$ and perform the re-training on the data set without $e$. The adversary returns an $\tilde{x}_e$ with queries to both models and the success criteria is the distance between $\tilde{\bfx}_\bfe$ and $\bfx_\bfe$.

\noindent{\bf Our attack and the intuition behind it.} 
We define an attack $\DataApp$ in this scenario that first randomly draws samples from the data distribution, and query the two models in the corresponding order. The adversary then returns the average of all samples whose output label is different. Intuitively, for a well generalized model, the impact of one sample's deletion to the model is mostly   local  rather than global. In this case, the average of these samples that have different outputs gives a much closer estimation of $\bfx_\bfe$ comparing to a random approximation. \Pnote{random guess?}

In the experiment, we run $\DataApp$ with 10000 random samples draw uniformly from the data range. We denote the attack to be failed when no sample has its label changed in this phase, otherwise we compare the distance of predicted $\tilde{\bfx}_\bfe$ to the average of samples whose output label changed. 
%\Mnote{this paragraph seems to explain something about the experiment, not the attack. Note that the attack chooses how many queries it makes, so that shall be part of the attack, but how many challenges we give to the attack is part of the experiment and security game.}

%\noindent{\bf Experiments' results.}  The result are shown in Table~\ref{tab:res-knn}. In the table, we can see that the distance is much smaller than the average sample distance. Note that our attack is completely model free and even free of knowledge of data distribution.  While our $\DataApp$ attack does not extract the exact location of the deleted sample, the existence of such an approximation  still suggests the leak of privacy. 

\begin{table}[ht]
\scriptsize
\centering
\resizebox{0.49\textwidth}{!}{%
\begin{tabular}{llll}
\hline
{ } &
  { Failed rate} &
  { Estimated example to $e$} &
  { Avg Sample Distance}  \\ \hline
{ Iris} &
  { 34\%} &
  { 0.32} &
  { 0.64} \\ \hline
{ Wine} &
  { 6.7\%} &
  { 0.75} &
  { 0.99} \\ \hline
\end{tabular}%
}
\vspace{0.2cm}
% \captionsetup{justification=centering}
\caption{Result of Deleted Data Approximation Attack in $K$-NN model}
\label{tab:res-knn}
\end{table} \vspace*{-0.5cm}

}
\subsubsection{Known-Instance Label Reconstruction} \label{sec:approx}
%\Mnote{If the space becomes and issue, this section is also a candidate to go to the appendix}
%Another class of reconstruction attacks focus on recovering the label of the deleted example.
We now we study   attacks in which the adversary knows the   instance $x$ of the deleted record $e=(x,y)$ and wishes to  approximate the true label $y$  by querying the models $\model$ and $\modelDel$. The goal is to beat the correctness of both models for true label $y$. 
This means that, in case the two models were supposed to hide the label (perhaps if it was a sensitive information to know very precisely) the data removal process, in this case, clearly goes against the goal of hiding $y$ in its exact form.  
%(See Section \ref{sec:delLabelRec} for attacks in which the adversary does  \emph{not} know the instance.)

%Since the attack inherently aims for a \emph{better} reconstruction than what the ``competitor'' models $\model$ and $\modelDel$ provide, we call such attacks \emph{label reconstruction} (rather than reconstruction) attacks.
% The formal security game is given below.

\begin{definition}[Known-instance  label reconstruction] \label{def:Del-Label-App-Recon}
This definition is identical to Definition \ref{def:Del-Recon} with  the only difference that  the adversary is now given $x_i$ (but not $y_i$) in Step \ref{Step:Chal} of the attack. 
\end{definition}
Even though one can define the success criteria of the attackers of Definition \ref{def:Del-Label-App-Recon}   the same way as those of Definition \ref{def:Del-Recon}, such attacks are only interesting if they can beat the precision of the answers provided by the two models $\model,\modelDel$, as anyone (including the adversary) could query those models on the point $x_i$, once $x_i$ is revealed. Our experiments show that such ``accuracy boosting'' attacks are indeed sometimes possible in the presence of deletion operations.

We propose a simple attack $\LabelApp$ in Construction \ref{alg:labelapp} below.    % \Mnote{shall use Algorithms} 
$\LabelApp$ makes an estimation on $y$ based on the output of the two models.

\begin{alg}[Attacker $\LabelApp$]
This attack is parameterized by $\lambda > 0$.
Given sample $x$, models $\model$ and $\modelDel$, and a constant $\lambda$, the label reconstruction adversary $\LabelApp$ proceeds as follows:
\begin{enumerate}
    \item Query   to obtain $\hat{y} = \model(x)$ and $\hat{y}' = \modelDel(x)$.
    \item Return $\tilde{y} =  \hat{y} + \lambda \cdot (\hat{y} - \hat{y}')$. \qedhere
\end{enumerate}
\label{alg:labelapp}
\end{alg}
% The attacker then returns $\tilde{y} =  \hat{y} + \lambda \cdot (\hat{y} - \hat{y}')$ as a close approximation to $y$ where $\lambda$ is a carefully chosen constant parameter of the attack. 
% Intuitively, we have $\hat{y}' \geq \hat{y}$ \Pnote{Is this true? The inequality can be either way, no? And you would get pushed in the right direction anyway?}. 
\paragraph{Intuition behind the attack.} Similar to the attacks of Section  \ref{sec:del-infer} (see Proposition \ref{prop:changes}), the loss of the deleted sample will increase after the deletion. 
For simplicity, suppose the loss is mean squared error. In this case, when the learner follows the ERM  rule, we have $|\hat{y}' - y|_2 \geq |\hat{y} - y|_2$.
Therefore, moving from $\hat{y}'$ towards $\hat{y}$ makes the prediction closer to the actual label $y$. Consequently, using a small positive $\lambda$ could lead to less loss. The best value of $\lambda$ in each different scenario could be empirically estimated by a similar size dataset that is individually sampled by the attacker.

\paragraph{Experiment details.} We perform the attack on  linear regression models. We test the attack on two classic regression datasets, the Boston Housing Price Dataset \cite{harrison1978hedonic} and the diabetes dataset \cite{efron2004least}. For each dataset, we train the model $\model$ with the whole dataset. 
% We then randomly pick a sample $e$ and perform the re-training on the dataset without $e$, as a way to implement perfect deletion. 
The adversary returns an approximation $\tilde{y}$.  $|\tilde{y} - y|_2$ will denote the distance of the prediction by the adversary, and we use $\min(|\model(x) - y|_2, |\modelDel(x) - y|_2)$ as the baseline value to compare the quality of adversary's prediction.
% the success criteria is that the distance between $\tilde{y}$ and $y$ is smaller than the distance between $\tilde{y}$ and both of $\model(x),\modelDel(x)$.

\paragraph{Results.} We calculate the average distance of $\Tilde{y}_i$ and $y_i$ with different $\lambda$ values.  Our results (in Table \ref{tab:res-lr}) show that there exists a $\lambda$ value for each dataset, such that can reduce the the estimated loss by around 70\%.
% \Pnote{reducing it from the baseline of $h$ and $h'$?}

% \section{Real Datasets}

% \Mnote{ The subsection above should go to the appendix as it does not fit our definition, but  the  subsection below should stay in the main body and be called "Deleted Label Reconstruction Attacks", because it perfectly fits Definition \ref{def:Del-Recon}.}

\begin{table}[ht]
\centering
% \scriptsize
\begin{tabular}{llllll}
\hline
{ } &
  { Best $\lambda$} &
  {   Models} &
  {   Adversary} &
  { $\%$}\\ \hline
{ Boston } &
  { 17.5} &
  { 21.897} &
  { 7.149} &
  { 30\%} \\ \hline
{ Diabetes} &
  { 30} &
  { 2859.7} &
  { 829.8} &
  { 28\%} \\ \hline
\end{tabular}\vspace{0.2cm}
% \captionsetup{justification=centering}
\caption{Result of the label reconstruction Attack on Logistic Regression. The column Models lists the  average of the minimum distance of the predictions of the two models $\model,\modelDel$. The column Adversary lists   the average distance of the prediction of the adversary and the real prediction, and the percentage shows the percentage of the improvement in the prediction  compared with the better of the predictions of the two models $\model,\modelDel$.}
\label{tab:res-lr}
\end{table}

\section{Weak Deletion Compliance} 
\label{sec:GGV}

In Sections~\ref{sec:del-infer} and \ref{sec:del-Rec}, we studied \emph{attacks} on data privacy under data deletion. The definitions of those sections  provide \emph{weak} guarantees on what adversary cannot do, hence they are suitable for stronger \emph{negative} results.
In this section, we investigate the other side; namely, positive results that can prevent attacks of Sections~\ref{sec:del-infer} and \ref{sec:del-Rec} and  provide strong guarantees about what adversary can(not) learn about the data that is being updated through deletion requests.
In particular, we observe that the deletion compliance definition of Garg, Goldwasser and Vasudevan~\cite{GGV20} would prevent attacks of Sections~\ref{sec:del-infer} and \ref{sec:del-Rec}. More precisely, we show that even a  \emph{weaker} variant of the \cite{GGV20} definition would prevent the attacks of Sections~\ref{sec:del-infer} and \ref{sec:del-Rec}. 

\paragraph{Components of deletion compliance definition.} The ``deletion compliance'' framework of Garg, Goldwasser and Vasudevan~\cite{GGV20} provides an intuitive way of capturing data deletion guarantees in general systems that collect and process data. This framework models the world by three interacting parties -- the data collector $\DataCol$, the deletion-requester $\DelReq$, and the environment $\Env$. All components are the same as those of \cite{GGV20}, however,  we will work with a modified $\DelReq$ and a different indistinguishability guarantee.
%Deletion compliance definition consists of the following three  interactive algorithms.
\begin{itemize}
        \item {\em Data collector (learner)  $\DataCol$}  represents the algorithm that collects the records (training examples) and processes data according to a (learning) mechanism.  For example $\DelReq$ might accept up to $n$ data storage requests and up to $k$ data deletion requests.
    \item {\em Deletion requester (user) $\DelReq$}   is a special honest user who only stores two particular examples $e_0,e_1$ and will delete one of them later. The timing of such requests are stated below. In the original \cite{GGV20}, the deletion requester just stores \emph{one} record $e$ and delete it, or that it might never store $e$ in the first place. At a high level, their $\DelReq$ is designed so that one can define privacy that even hides the deletion itself, while our variant is designed for a weaker definition that does not aim to hide the fact that some deletion has happened.
    \item {\em Environment (adversary) $\Env$}  models the ``rest of users'' who might not be honest and who are interested in finding out what $\DelReq$ is deleting. The interaction between $\Env$ and $\DataCol,\DelReq$ is defined by  the interfaces of $\DataCol,\DelReq$.
\end{itemize} 

\paragraph{Interaction of the components.} 
We let  $\Records $ model a \emph{universe of records}. For example, $\Records = \Supp(D)$ for a distribution over labeled examples $\cX \times \cY$.
We now describe the restrictions on how the components interact with each other. Other than the below-mentioned restrictions, the parties run in PPT.
\begin{itemize}
    \item  $\DataCol$ accepts instructions   $\Add(e), \Del(e), \Eval(x)$. The interpretation of these instructions are as follows. $\Add(e)$ adds the record $e \in \Records$ to the set of records stored at $\DataCol$. $\Del(e)$ removes $e$ from the set stored by the data collector, and $\Eval(x)$ returns the evaluation of the ``current model'' stored by $\DataCol$ (which is the result of learning over the set  stored at $\DataCol$) on $x$ and returns the answer.
    \item As in \cite{GGV20}, we also require that only $\Env$ can send messages to $\DelReq$. At some point in the execution of the system $\Env$ sends $\DelReq$ the following messages, which is   followed by messages from $\DelReq$ to $\DataCol$ as described below.

    \begin{enumerate}
        \item  $(\Add, e_0,e_1)$:   $\DelReq$   sends $\Add(e_0),\Add(e_1)$ to $\DataCol$.
        \item $\Del$:   $\DelReq$ will send $\Del(e)$ to $\DataCol$ where $e \in \set{e_0,e_1}$. By $\DelReq_b$ we refer to the instantiation of $\DelReq$ that sends $\Del(e_b)$ to $\DataCol$.
    \end{enumerate}
\end{itemize}

\paragraph{Weak deletion compliance.} For our purposes, we consider a  different weaker definition (compared to that of \cite{GGV20}) that still captures all attacks of Section~\ref{sec:del-infer} and \ref{sec:del-Rec}. To start, we define two worlds, World 0 and World 1, corresponding to the instantiation of $\DelReq$ by $\DelReq_0$ and $\DelReq_1$.

\begin{definition}[Weak deletion compliance] \label{def:DelComp} Let the interactive algorithms $\DataCol,\Env,\DelReq$ be, in order, the data collector, the environment, and the deletion requester (interactive) algorithms limited to interact as described above.  We call $\DataCol$  \emph{$\eps$   deletion compliant}, if no PPT $\Env$ can detect whether it is in World 0 (with $\DelReq_0$) or World 1 (with $\DelReq_1$) with advantage more than $\eps$.  If this holds under the restriction that $\Env$ makes at most $(k-1)$ deletion requests during the execution, then $\DataCol$ is said to be $\eps$-weak deletion-compliant for up to $k$ deletions \end{definition}

\remove{
We place the following restrictions on the operation of the deletion requester $\DelReq$:
\begin{itemize}[topsep=5pt]
    \item After it starts execution, $\DelReq$ sends \emph{two} records $e_0$ and $e_1$ to the data collector $\DataCol$.
    \item When it is instructed to delete by the environment $\Env$, in World 0 it sends a deletion request to $\DataCol$ for $e_0$, whereas in World 1 it sends a deletion request for $e_1$.
\end{itemize}

The above is also represented in \cref{fig:symm-del-comp}. Given $\Env$ and $\DelReq$ from context, let $\view_E^{0}$ (and $\view_E^{1}$ denote the view of $\Env$ in World 0 (World 1) execution involving $\DataCol$, $\Env$, and $\DelReq$. Finally, since we will only be looking at the view of $\Env$ in our definition, we make one more change to the framework by not terminating the execution with the deletion request from $\DelReq$, but instead allow $\Env$ to call for termination at any point it wishes after this deletion.
}

\paragraph{Comparison with  \cite{GGV20}.}  The key differences between our Definition~\ref{def:DelComp} and that of \cite{GGV20} are as follows. In each case, we state  the property of our definition in contrast to that of \cite{GGV20}.
\begin{itemize}
    \item {\bf Hiding the state of $\DataCol$ from   adversary.}
    The definition of \cite{GGV20} focuses on scenarios where the data collector's state might be revealed at some point in the future (e.g., due to a subpoena). However, in this work we focus on hiding the information that is \emph{leaked} from the data collector (about  deleted record) through   interaction with the adversary.
    \item {\bf Not aiming to hide the deletion itself.} Whereas plain deletion-compliance asks that deletion make the world look as though the deleted data were never present in the first place, here we only ask that it not be revealed \emph{which} record was deleted. For instance, a data collector that is weak deletion-compliant might still reveal the number of deletions it has processed, as long as the data that is deleted is not revealed. While weaker than deletion-compliance definition of \cite{GGV20}, our notion is fit for hiding the deleted record among the records in the training set, and still giving a more general and stronger definition than Definition~\ref{def:WeakDelPriv}.
\end{itemize}

%See Section \ref{sec:GGV-relation} for more details about the comparison between the two definitions.

We now formally discuss why   Definition~\ref{def:DelComp} captures the attacks of Section~\ref{sec:del-infer} and \ref{sec:del-Rec}. Recall that Definition~\ref{def:WeakDelPriv} was already shown in Theorem~\ref{thm:fromRecToInf} to be a stronger notion than instance and label reconstruction attacks (Definition~\ref{def:Del-Recon}). Hence, we just need to show   that Definition~\ref{def:DelComp} is   stronger   than Definition~\ref{def:WeakDelPriv}.
%Intuitively, this is because any of the attacks of Section~\ref{sec:attacks}, if successful, can be used to distinguish between the record that was deleted and a random one that was not, and this ability can be used by the environment to distinguish between the worlds for certain choices of $e_0$ and $e_1$, breaking the definition.

\begin{figure*}[ht]
\centering
\includegraphics[width=0.85\textwidth]{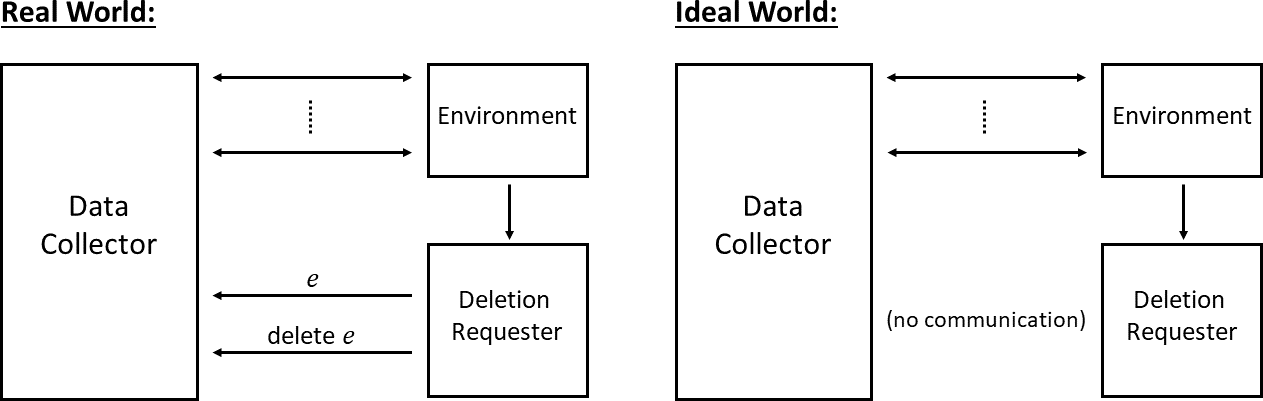}
\captionsetup{justification=centering}
\caption{The real and ideal worlds for (strong) deletion compliance}
% \Pnote{Any idea how to make the captions centered?}
\label{fig:del-comp}

\includegraphics[width=0.85\textwidth]{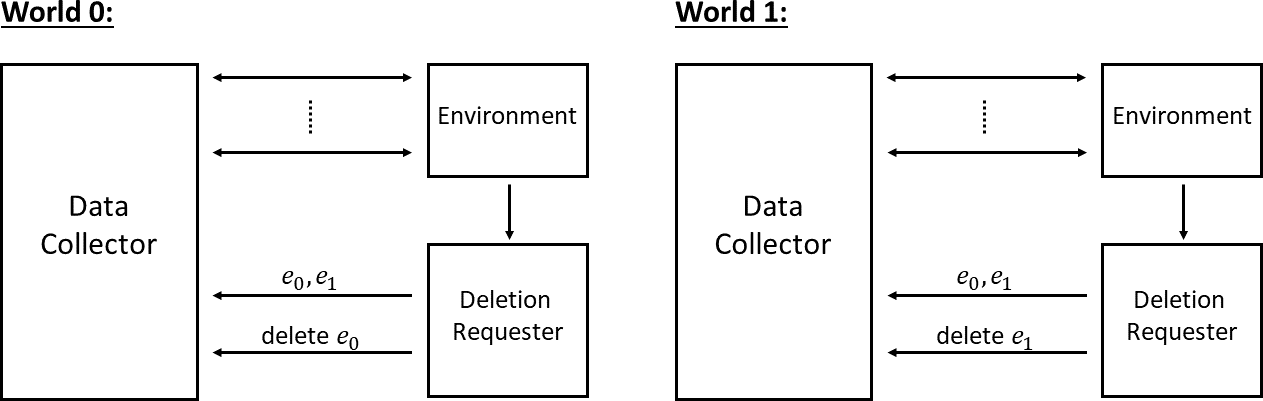}
\captionsetup{justification=centering}
\caption{The worlds for weak deletion-compliance}
\label{fig:symm-del-comp}
\end{figure*}

\begin{theorem}[Deletion inference from compliance] \label{thm:InfComp} Let $\Learn$ be a learner, $\Del$ be a deletion   mechanism for $\Learn$,     $D$ be a distribution over  labeled examples,  and 
$\Records=\Supp(D)$ be the universe of records. The data collector  $\DataCol$  answers queries as follows. 
\begin{enumerate}
    \item $\DataCol$ does not respond any $\Del$ or $\Eval$ queries till receiving $n$ $\Add(\cdot)$ queries; we refer to those $n$ added records as set $\cS$.
    \item $\DataCol$   permutes $\cS$  and gets $h \gets \Learn(\cS)$.
    \item Then it answers $\Eval(e) = h(e)$ queries arbitrarily.
    \item Then it accepts \emph{one} $\Del(e)$, and lets $h_{-e} = \Del(h,e)$.
    \item Then it continues answering $\Eval(e) = h(e)$ queries.
\end{enumerate} 
If $\DataCol$ is $(2\eps-1)$-deletion compliant (as in Definition~\ref{def:DelComp}) against PPT adversaries with oracle access to $D$, then the scheme $(\Learn,\Del)$ is $\eps$-secure against deletion inference (as in Definition~\ref{def:WeakDelPriv}).
\end{theorem}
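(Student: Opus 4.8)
The plan is to prove the contrapositive via a black-box reduction: from any PPT deletion inference adversary $\Adv$ achieving success probability $\rho$ in the game of Definition~\ref{def:WeakDelPriv} (with data distribution $S_n\equiv D^n$), I construct a PPT environment $\Env$ that distinguishes World~$0$ from World~$1$ in the compliance game of Definition~\ref{def:DelComp} with advantage exactly $2\rho-1$. Since $(2\eps-1)$-deletion compliance caps this advantage at $2\eps-1$, we obtain $2\rho-1\le 2\eps-1$, hence $\rho\le\eps$; as $\Adv$ was arbitrary, every PPT inference adversary succeeds with probability at most $\eps$, which is exactly $\eps$-security against deletion inference.

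First I would describe how $\Env$ embeds the inference game into its interaction with $\DataCol$ and $\DelReq$. Using its oracle access to $D$, $\Env$ samples the two challenge records $e_0,e_1\gets D$ together with $n-2$ filler records $w_1,\dots,w_{n-2}\gets D$. It forwards $(\Add,e_0,e_1)$ to $\DelReq$ (so $\DelReq$ issues $\Add(e_0),\Add(e_1)$ to $\DataCol$) and directly issues $\Add(w_1),\dots,\Add(w_{n-2})$ to $\DataCol$. After these $n$ additions $\DataCol$ has stored $\cS=\{e_0,e_1,w_1,\dots,w_{n-2}\}$, permutes it, and trains $h\gets\Learn(\cS)$. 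Because $D^n$ is i.i.d.\ (hence exchangeable), the joint distribution of $(\cS,e_0,e_1)$ produced this way is identical to that of the inference game, where $e_0,e_1$ are the records at two random distinct indices of $\cS\gets S_n$.

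Then I would run $\Adv$ internally, handing it the challenge pair $(e_0,e_1)$ and answering each of its oracle queries $x$ to the pre-deletion model by forwarding $\Eval(x)$ to $\DataCol$. When $\Adv$ instructs the transition, $\Env$ sends the $\Del$ message to $\DelReq$; in World~$b$ this triggers $\DelReq_b$ to send $\Del(e_b)$ to $\DataCol$, so that $\DataCol$ thereafter evaluates $h_{-e_b}=\Del(h,e_b)=\modelDel$. $\Env$ answers $\Adv$'s post-deletion oracle queries via $\Eval(\cdot)$ and finally outputs whatever bit $b'$ that $\Adv$ outputs. The key observation is that World~$b$ of the compliance game is, step for step, the inference game conditioned on the challenger's coin being $b$: the models exposed to $\Adv$ are exactly $\model$ before and $\modelDel=h_{-e_b}$ after. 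Note that $\Env$ issues no $\Del$ query of its own, so for $k=1$ the restriction that $\Env$ make at most $k-1=0$ deletions is met.

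The final step is the arithmetic tying advantage to success. Writing $p_b=\Pr[b'=1\mid \text{World }b]$, the distinguishing advantage of $\Env$ is $|p_1-p_0|$, while (since $b$ is uniform in the inference game) the success probability of $\Adv$ is $\rho=\tfrac12(1-p_0)+\tfrac12 p_1=\tfrac12+\tfrac12(p_1-p_0)$, so $p_1-p_0=2\rho-1$; letting $\Env$ flip its output if needed makes the advantage $|2\rho-1|=2\rho-1$ (recall $\rho\ge\tfrac12$ is achievable trivially). I do not anticipate a serious obstacle here; the points demanding care are purely definitional — verifying that the permutation step and the ``wait for $n$ adds'' behavior of $\DataCol$ make the embedded set match the $S_n\equiv D^n$ distribution exactly, that the timing of pre- versus post-deletion $\Eval$ queries lines up with $\Adv$'s two oracle phases, and that the advantage-to-success conversion matches the $(2\eps-1)\leftrightarrow\eps$ bookkeeping in the statement.
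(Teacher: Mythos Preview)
Your proposal is correct and follows essentially the same contrapositive reduction as the paper: build an environment $\Env$ that simulates the deletion-inference challenger, routes the two challenge records through $\DelReq$, answers $\Adv$'s oracle queries via $\Eval$, and outputs $\Adv$'s bit. If anything, you are more careful than the paper about two points --- explicitly routing $e_0,e_1$ through $\DelReq$'s $\Add$ interface (the paper informally has $\Env$ hand all $n$ records directly to $\DataCol$), and spelling out the $\rho\mapsto 2\rho-1$ advantage arithmetic cleanly.
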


% See Appendix \ref{sec:proofs} for the proof.

\begin{proof}[Proof of Theorem \ref{thm:InfComp}]
We give a proof by reduction.
Suppose $\Adv$ breaks the membership inference security game of Definition~\ref{def:WeakDelPriv} with probability $(1+\eps)/2$. We construct an environment $\Env$ that $\eps$-distinguishes $\DelReq_0$ from $\DelReq_0$ with advantage $\eps$ that proceeds as follows:
\begin{enumerate}
    \item $\Env$ plays the role of the challenger from Definition~\ref{def:WeakDelPriv} and picks a data set $\set{e_1,\dots,e_n}=\cS \gets S_n$ of size $n$. $\Env$ passes this   to the $\DataCol$ and  picks $i\neq j \in [n]$ at random as the challenge records.
    \item Next, $\Env$ instantiates $\Adv$ and provides it with the records $e_i, e_j$ and \emph{oracle} access to $\model$ (through $\DataCol$). At the end of this step, the adversary instructs moving to the next step.
    \item $\Env$ passes $(e_i,e_j)$  to $\DelReq$ (which will then request the deletion of one of the two records). 
    \item $\Env$ actives the $\Adv$ again and it is again provided \emph{oracle} access to $\model$ (through $\DataCol$). At the end of this step, the adversary's output is included in the output of the environment. 
\end{enumerate}
The view of the adversary $\Adv$ in the above experiment is identical to its view as part of Definition~\ref{def:WeakDelPriv}. Thus, the output of $\Adv$ will correctly (with probability greater than $\epsilon$) identify whether $\DelReq$ requests the deletion of record $e_i$ or record $e_j$. This allows us to conclude that the view of the $\Env$ changes depending of whether $\DelReq$ requests deletion of $e_i$ or $e_j$.
\end{proof}

Using the same three components described in Section~\ref{sec:GGV} (with a different $\DelReq$), \cite{GGV20} defines the notion of \emph{deletion-compliance}. Here the ideal world is the same as the real world in all respects except that $\DelReq$ is not allowed to communicate with $\DataCol$ as represented in \cref{fig:del-comp}. (The restriction of $\DelReq$ not being able to send messages to $\Env$ was imposed in order for this ideal world to be well-defined, by excluding cases where $\Env$ sends to $\DataCol$ messages that depend non-trivially on $\DelReq$'s records.)
\cite{GGV20}  calls $\DataCol$  to be $\eps$-deletion-compliant if, for any $\Env$ and $\DelReq$, the joint distributions of the state of $\DataCol$ and view of $\Env$ in the real and ideal world are $\eps$-close in the statistical distance, denoted by notation $\approx_{\eps}$. That is, $$(\state_D^{\real},\view_E^{\real}) \approx_{\eps} (\state_D^{\ideal},\view_E^{\ideal}).$$
The above (strong) definition from \cite{GGV20} captures the intuition that a system is deletion-compliant if the state of the world after its deleting a record is similar to what it would have been if the record had never been part of the system in the first place. Note that this requirement of $\eps$-closeness in statistical distance is more relaxed than the kind of closeness of distributions required by differential privacy, and so DP can be used to satisfy these requirements. \cite{GGV20}  showed how to obtain their strong deletion compliance based on differentially private mechanisms.

To contrast with Figure \ref{fig:del-comp}, in Figure \ref{fig:symm-del-comp} we have depicted the more symmetric worlds that are behind our Definition \ref{def:DelComp}. In particular, Definition \ref{def:DelComp} requires that  no PPT $\Env$ can distinguish between World 0 and World 1 of Figure~\ref{fig:symm-del-comp} by more than advantage $\eps$.

{

\bibliographystyle{alpha}
\bibliography{Biblio/abbrev0,Biblio/crypto,Biblio/OtherRefs, Biblio/mlcrypto}

\newcommand{\etalchar}[1]{$^{#1}$}
\begin{thebibliography}{GGHvdM19}

\bibitem[ACD94]{aeberhard1994comparative}
Stefan Aeberhard, Danny Coomans, and Olivier {De Vel}.
\newblock {Comparative analysis of statistical pattern recognition methods in
  high dimensional settings}.
\newblock {\em Pattern Recognition}, 27(8):1065--1077, 1994.

\bibitem[BBF{\etalchar{+}}21]{brown2021memorization}
Gavin Brown, Mark Bun, Vitaly Feldman, Adam Smith, and Kunal Talwar.
\newblock When is memorization of irrelevant training data necessary for
  high-accuracy learning?
\newblock In {\em Proceedings of the 53rd Annual ACM SIGACT Symposium on Theory
  of Computing}, pages 123--132, 2021.

\bibitem[BBHM16]{backes2016membership}
Michael Backes, Pascal Berrang, Mathias Humbert, and Praveen Manoharan.
\newblock Membership privacy in microrna-based studies.
\newblock In {\em Proceedings of the 2016 ACM SIGSAC Conference on Computer and
  Communications Security}, pages 319--330, Vienna, 2016. ACM.

\bibitem[BBKN14]{beimel2014bounds}
Amos Beimel, Hai Brenner, Shiva~Prasad Kasiviswanathan, and Kobbi Nissim.
\newblock Bounds on the sample complexity for private learning and private data
  release.
\newblock {\em Machine learning}, 94(3):401--437, 2014.

\bibitem[BCCC{\etalchar{+}}20]{bourtoule2019machine}
Lucas Bourtoule, Varun Chandrasekaran, Christopher~A. Choquette-Choo, Hengrui
  Jia, Adelin Travers, Baiwu Zhang, David Lie, and Nicolas Papernot.
\newblock Machine unlearning, 2020.

\bibitem[BST14]{bassily2014private}
Raef Bassily, Adam Smith, and Abhradeep Thakurta.
\newblock Private empirical risk minimization: Efficient algorithms and tight
  error bounds.
\newblock In {\em 2014 IEEE 55th Annual Symposium on Foundations of Computer
  Science}, pages 464--473, Philadelphia, 2014. IEEE, IEEE.

\bibitem[Can01]{FOCS:Canetti01}
Ran Canetti.
\newblock Universally composable security: A new paradigm for cryptographic
  protocols.
\newblock In {\em 42nd Annual Symposium on Foundations of Computer Science},
  pages 136--145, Las Vegas, NV, USA, October~14--17, 2001. {IEEE} Computer
  Society Press.

\bibitem[CLE{\etalchar{+}}19]{carlini2019secret}
Nicholas Carlini, Chang Liu, {\'U}lfar Erlingsson, Jernej Kos, and Dawn Song.
\newblock The secret sharer: Evaluating and testing unintended memorization in
  neural networks.
\newblock In {\em 28th USENIX Security Symposium (USENIX Security 19)}, pages
  267--284, Santa Clara, 2019. USENIX Symposium.

\bibitem[CN20]{CN19}
Aloni Cohen and Kobbi Nissim.
\newblock Towards formalizing the gdpr{\textquoteright}s notion of singling
  out.
\newblock {\em Proceedings of the National Academy of Sciences},
  117(15):8344--8352, 2020.

\bibitem[CTCP20]{choo2020label}
Christopher A~Choquette Choo, Florian Tramer, Nicholas Carlini, and Nicolas
  Papernot.
\newblock Label-only membership inference attacks, 2020.

\bibitem[CY15]{cao2015towards}
Yinzhi Cao and Junfeng Yang.
\newblock Towards making systems forget with machine unlearning.
\newblock In {\em 2015 IEEE Symposium on Security and Privacy}, pages 463--480,
  San Jose, 2015. IEEE, IEEE.

\bibitem[CZW{\etalchar{+}}21]{chen2020machine}
Min Chen, Zhikun Zhang, Tianhao Wang, Michael Backes, Mathias Humbert, and Yang
  Zhang.
\newblock When machine unlearning jeopardizes privacy.
\newblock In {\em Proceedings of the 2021 ACM SIGSAC Conference on Computer and
  Communications Security}, pages 896--911, 2021.

\bibitem[DBB18]{dandekar2018differential}
Ashish Dandekar, Debabrota Basu, and St{\'e}phane Bressan.
\newblock Differential privacy for regularised linear regression.
\newblock In {\em International Conference on Database and Expert Systems
  Applications}, pages 483--491, Regensburg, 2018. Springer, Springer.

\bibitem[dlT18]{de2018guide}
Lydia de~la Torre.
\newblock A guide to the {C}alifornia consumer privacy act of 2018.
\newblock {\em Available at SSRN 3275571}, 2018.

\bibitem[DMNS06]{TCC:DMNS06}
Cynthia Dwork, Frank McSherry, Kobbi Nissim, and Adam Smith.
\newblock Calibrating noise to sensitivity in private data analysis.
\newblock In Shai Halevi and Tal Rabin, editors, {\em TCC~2006: 3rd Theory of
  Cryptography Conference}, volume 3876 of {\em Lecture Notes in Computer
  Science}, pages 265--284, New York, NY, USA, March~4--7, 2006. Springer,
  Heidelberg, Germany.

\bibitem[DN03]{dinur2003revealing}
Irit Dinur and Kobbi Nissim.
\newblock Revealing information while preserving privacy.
\newblock In {\em Proceedings of the twenty-second ACM SIGMOD-SIGACT-SIGART
  symposium on Principles of database systems}, pages 202--210, 2003.

\bibitem[DSS{\etalchar{+}}15]{dwork2015robust}
Cynthia Dwork, Adam Smith, Thomas Steinke, Jonathan Ullman, and Salil Vadhan.
\newblock Robust traceability from trace amounts.
\newblock In {\em 2015 IEEE 56th Annual Symposium on Foundations of Computer
  Science}, pages 650--669. IEEE, IEEE, 2015.

\bibitem[DSSU17]{dwork2017exposed}
Cynthia Dwork, Adam Smith, Thomas Steinke, and Jonathan Ullman.
\newblock Exposed! a survey of attacks on private data.
\newblock {\em Annual Review of Statistics and Its Application}, 4:61--84,
  2017.

\bibitem[DTTZ14]{dwork2014analyze}
Cynthia Dwork, Kunal Talwar, Abhradeep Thakurta, and Li~Zhang.
\newblock Analyze gauss: optimal bounds for privacy-preserving principal
  component analysis.
\newblock In {\em Proceedings of the forty-sixth annual ACM symposium on Theory
  of computing}, pages 11--20, 2014.

\bibitem[Dwo08]{dwork2008differential}
Cynthia Dwork.
\newblock Differential privacy: A survey of results.
\newblock In {\em International conference on theory and applications of models
  of computation}, pages 1--19. Springer, 2008.

\bibitem[EHJ{\etalchar{+}}04]{efron2004least}
Bradley Efron, Trevor Hastie, Iain Johnstone, Robert Tibshirani, et~al.
\newblock Least angle regression.
\newblock {\em The Annals of statistics}, 32(2):407--499, 2004.

\bibitem[Fel20]{feldman2020does}
Vitaly Feldman.
\newblock Does learning require memorization? a short tale about a long tail.
\newblock In {\em Proceedings of the 52nd Annual ACM SIGACT Symposium on Theory
  of Computing}, pages 954--959, 2020.

\bibitem[Fis36]{fisher1936use}
Ronald~A Fisher.
\newblock {The use of multiple measurements in taxonomic problems}.
\newblock {\em Annals of eugenics}, 7(2):179--188, 1936.

\bibitem[FJR15]{fredrikson2015model}
Matt Fredrikson, Somesh Jha, and Thomas Ristenpart.
\newblock Model inversion attacks that exploit confidence information and basic
  countermeasures.
\newblock In {\em Proceedings of the 22nd ACM SIGSAC Conference on Computer and
  Communications Security}, pages 1322--1333, 2015.

\bibitem[FLJ{\etalchar{+}}14]{fredrikson2014privacy}
Matthew Fredrikson, Eric Lantz, Somesh Jha, Simon Lin, David Page, and Thomas
  Ristenpart.
\newblock Privacy in pharmacogenetics: An end-to-end case study of personalized
  warfarin dosing.
\newblock In {\em 23rd USENIX Security Symposium (USENIX Security 14)}, pages
  17--32, 2014.

\bibitem[GAS19]{golatkar2019eternal}
Aditya Golatkar, Alessandro Achille, and Stefano Soatto.
\newblock Eternal sunshine of the spotless net: Selective forgetting in deep
  neural networks.
\newblock {\em arXiv preprint arXiv:1911.04933}, 2019.

\bibitem[GGHvdM19]{guo2019certified}
Chuan Guo, Tom Goldstein, Awni Hannun, and Laurens van~der Maaten.
\newblock Certified data removal from machine learning models.
\newblock {\em arXiv preprint arXiv:1911.03030}, 2019.

\bibitem[GGV20]{GGV20}
Sanjam Garg, Shafi Goldwasser, and Prashant~Nalini Vasudevan.
\newblock Formalizing data deletion in the context of the right to be
  forgotten.
\newblock In Anne Canteaut and Yuval Ishai, editors, {\em Advances in
  Cryptology - {EUROCRYPT} 2020 - 39th Annual International Conference on the
  Theory and Applications of Cryptographic Techniques, Zagreb, Croatia, May
  10-14, 2020, Proceedings, Part {II}}, volume 12106 of {\em Lecture Notes in
  Computer Science}, pages 373--402. Springer, 2020.

\bibitem[GGVZ19]{ginart2019making}
Antonio Ginart, Melody Guan, Gregory Valiant, and James~Y Zou.
\newblock Making ai forget you: Data deletion in machine learning.
\newblock In {\em Advances in Neural Information Processing Systems}, pages
  3513--3526, 2019.

\bibitem[GM84]{goldwasser1984probabilistic}
Shafi Goldwasser and Silvio Micali.
\newblock Probabilistic encryption.
\newblock {\em Journal of computer and system sciences}, 28(2):270--299, 1984.

\bibitem[HAR78]{harrison1978hedonic}
DAVID HARRISON.
\newblock Hedonic housing prices and the demand for clean air.
\newblock {\em JOURNAL OF ENVIRONMENTAL ECONOMICS AND MANAGEMENT}, 5:81--102,
  1978.

\bibitem[HSR{\etalchar{+}}08]{homer2008resolving}
Nils Homer, Szabolcs Szelinger, Margot Redman, David Duggan, Waibhav Tembe,
  Jill Muehling, John~V Pearson, Dietrich~A Stephan, Stanley~F Nelson, and
  David~W Craig.
\newblock Resolving individuals contributing trace amounts of dna to highly
  complex mixtures using high-density snp genotyping microarrays.
\newblock {\em PLoS Genet}, 4(8):e1000167, 2008.

\bibitem[HvdSB19]{hoofnagle2019european}
Chris~Jay Hoofnagle, Bart van~der Sloot, and Frederik~Zuiderveen Borgesius.
\newblock The european union general data protection regulation: what it is and
  what it means.
\newblock {\em Information \& Communications Technology Law}, 28(1):65--98,
  2019.

\bibitem[ISCZ20]{izzo2020approximate}
Zachary Izzo, Mary~Anne Smart, Kamalika Chaudhuri, and James Zou.
\newblock Approximate data deletion from machine learning models: Algorithms
  and evaluations.
\newblock {\em arXiv preprint arXiv:2002.10077}, 2020.

\bibitem[JWEG20]{jayaraman2020revisiting}
Bargav Jayaraman, Lingxiao Wang, David Evans, and Quanquan Gu.
\newblock Revisiting membership inference under realistic assumptions.
\newblock {\em arXiv preprint arXiv:2005.10881}, 2020.

\bibitem[KH{\etalchar{+}}09]{krizhevsky2009learning}
Alex Krizhevsky, Geoffrey Hinton, et~al.
\newblock Learning multiple layers of features from tiny images.
\newblock 2009.

\bibitem[LBBH98]{lecun1998gradient}
Yann LeCun, L{\'e}on Bottou, Yoshua Bengio, and Patrick Haffner.
\newblock Gradient-based learning applied to document recognition.
\newblock {\em Proceedings of the IEEE}, 86(11):2278--2324, 1998.

\bibitem[LBG17]{long2017towards}
Yunhui Long, Vincent Bindschaedler, and Carl~A Gunter.
\newblock Towards measuring membership privacy.
\newblock {\em arXiv preprint arXiv:1712.09136}, 2017.

\bibitem[LBW{\etalchar{+}}18]{long2018understanding}
Yunhui Long, Vincent Bindschaedler, Lei Wang, Diyue Bu, Xiaofeng Wang, Haixu
  Tang, Carl~A Gunter, and Kai Chen.
\newblock Understanding membership inferences on well-generalized learning
  models.
\newblock {\em arXiv preprint arXiv:1802.04889}, 2018.

\bibitem[LST15]{lake2015human}
Brenden~M Lake, Ruslan Salakhutdinov, and Joshua~B Tenenbaum.
\newblock Human-level concept learning through probabilistic program induction.
\newblock {\em Science}, 350(6266):1332--1338, 2015.

\bibitem[LZ20]{li2020label}
Zheng Li and Yang Zhang.
\newblock Label-leaks: Membership inference attack with label, 2020.

\bibitem[MSM93]{marcus1993building}
Mitchell Marcus, Beatrice Santorini, and Mary~Ann Marcinkiewicz.
\newblock Building a large annotated corpus of english: The penn treebank.
\newblock 1993.

\bibitem[NBW{\etalchar{+}}17]{nissim2017bridging}
Kobbi Nissim, Aaron Bembenek, Alexandra Wood, Mark Bun, Marco Gaboardi, Urs
  Gasser, David~R O'Brien, Thomas Steinke, and Salil Vadhan.
\newblock Bridging the gap between computer science and legal approaches to
  privacy.
\newblock {\em Harv. JL \& Tech.}, 31:687, 2017.

\bibitem[NRSM20]{neel2020descent}
Seth Neel, Aaron Roth, and Saeed Sharifi-Malvajerdi.
\newblock Descent-to-delete: Gradient-based methods for machine unlearning.
\newblock {\em arXiv preprint arXiv:2007.02923}, 2020.

\bibitem[NS06]{narayanan2006break}
Arvind Narayanan and Vitaly Shmatikov.
\newblock How to break anonymity of the netflix prize dataset.
\newblock {\em arXiv preprint cs/0610105}, 2006.

\bibitem[NY90]{naor1990public}
Moni Naor and Moti Yung.
\newblock Public-key cryptosystems provably secure against chosen ciphertext
  attacks.
\newblock In {\em Proceedings of the twenty-second annual ACM symposium on
  Theory of computing}, pages 427--437, 1990.

\bibitem[RGC15]{ribeiro2015mlaas}
Mauro Ribeiro, Katarina Grolinger, and Miriam A~M Capretz.
\newblock {Mlaas Machine learning as a service}.
\newblock In {\em 2015 IEEE 14th International Conference on Machine Learning
  and Applications (ICMLA)}, pages 896--902. IEEE, 2015.

\bibitem[SBB{\etalchar{+}}20]{salem2020updates}
Ahmed Salem, Apratim Bhattacharya, Michael Backes, Mario Fritz, and Yang Zhang.
\newblock Updates-leak: Data set inference and reconstruction attacks in online
  learning.
\newblock In {\em 29th USENIX Security Symposium (USENIX Security 20)}, pages
  1291--1308, 2020.

\bibitem[SCS13]{song2013stochastic}
Shuang Song, Kamalika Chaudhuri, and Anand~D Sarwate.
\newblock Stochastic gradient descent with differentially private updates.
\newblock In {\em 2013 IEEE Global Conference on Signal and Information
  Processing}, pages 245--248. IEEE, 2013.

\bibitem[She19]{sheffet2019old}
Or~Sheffet.
\newblock Old techniques in differentially private linear regression.
\newblock In {\em Algorithmic Learning Theory}, pages 789--827, 2019.

\bibitem[SOJH09]{sankararaman2009genomic}
Sriram Sankararaman, Guillaume Obozinski, Michael~I Jordan, and Eran Halperin.
\newblock Genomic privacy and limits of individual detection in a pool.
\newblock {\em Nature genetics}, 41(9):965--967, 2009.

\bibitem[SRS17]{song2017machine}
Congzheng Song, Thomas Ristenpart, and Vitaly Shmatikov.
\newblock Machine learning models that remember too much.
\newblock In {\em Proceedings of the 2017 ACM SIGSAC Conference on Computer and
  Communications Security}, pages 587--601, Dallas, 2017. ACM.

\bibitem[SSSS17]{shokri2017membership}
Reza Shokri, Marco Stronati, Congzheng Song, and Vitaly Shmatikov.
\newblock {Membership inference attacks against machine learning models}.
\newblock In {\em 2017 IEEE Symposium on Security and Privacy (SP)}, pages
  3--18, San Jose, 2017. IEEE, IEEE.

\bibitem[SWM93]{street1993nuclear}
W~Nick Street, William~H Wolberg, and Olvi~L Mangasarian.
\newblock {Nuclear feature extraction for breast tumor diagnosis}.
\newblock In {\em Biomedical image processing and biomedical visualization},
  volume 1905, pages 861--870. International Society for Optics and Photonics,
  1993.

\bibitem[SZH{\etalchar{+}}19]{salem2019ml}
Ahmed Salem, Yang Zhang, Mathias Humbert, Mario Fritz, and Michael Backes.
\newblock Ml-leaks: Model and data independent membership inference attacks and
  defenses on machine learning models.
\newblock In {\em Network and Distributed Systems Security Symposium 2019}.
  Internet Society, 2019.

\bibitem[TTZ15]{talwar2015nearly}
Kunal Talwar, Abhradeep~Guha Thakurta, and Li~Zhang.
\newblock Nearly optimal private lasso.
\newblock In {\em Advances in Neural Information Processing Systems}, pages
  3025--3033, 2015.

\bibitem[VBE18]{VBE18}
Michael Veale, Reuben Binns, and Lilian Edwards.
\newblock Algorithms that remember: Model inversion attacks and data protection
  law.
\newblock {\em CoRR}, abs/1807.04644, 2018.

\bibitem[WFJN16]{wu2016methodology}
Xi~Wu, Matthew Fredrikson, Somesh Jha, and Jeffrey~F Naughton.
\newblock A methodology for formalizing model-inversion attacks.
\newblock In {\em 2016 IEEE 29th Computer Security Foundations Symposium
  (CSF)}, pages 355--370, Lisbon, 2016. IEEE, IEEE.

\bibitem[YZCL19]{yang2019neural}
Ziqi Yang, Jiyi Zhang, Ee-Chien Chang, and Zhenkai Liang.
\newblock Neural network inversion in adversarial setting via background
  knowledge alignment.
\newblock In {\em Proceedings of the 2019 ACM SIGSAC Conference on Computer and
  Communications Security}, pages 225--240, London, 2019. ACM.

\bibitem[ZBWT{\etalchar{+}}20]{zanella-bguelin2020analyzing}
Santiago Zanella-B\'eguelin, Lukas Wutschitz, Shruti Tople, Victor R\"uhle,
  Andrew Paverd, Olge Ohrimenko, Boris K\"opf, and Marc Brockschmidt.
\newblock Analyzing information leakage of updates to natural language models.
\newblock In {\em ACM Conference on Computer and Communication Security (CCS)}.
  ACM, ACM, November 2020.

\end{thebibliography}
}
{
%\small
\appendix

\section{Hyperparameters of Models}
\label{sec:hyperparameter}

Here we describe the hyperparameters of  the models used in the experiments of our paper.
\begin{itemize}%[noitemsep,topsep=0pt,parsep=0pt,partopsep=0pt]
    \item \textbf{MLP:} We use multiple layer perceptron with two hidden layers. For regression, we set the size of hidden layers as $(20, 2)$, and for classification, we set the size of hidden layers as $(20, 10)$. The reason behind is that the output layer of classification tasks have more neurons. We use LBFGS as the optimization algorithm to train the model, and we train $200$ epochs on each model.
    \item \textbf{SVM:} We use the default SVMClassifier and SVMRegressor in Scikit-learn. Specifically, we use the RBF kernel with $C=1.0$.
    \item \textbf{Decision tree:} For the decision tree model, we use the default DecisionTreeClassifier and DecisionTreeRegressor in Scikit-learn. Specifically, we use Gini impurity to split the leafs and do not set a limit on the tree size. 
    %That is, the model is supposed to memorize every training example. This is the reason why the deletion inference always succeed on the decision tree models in the experiments. 
    \item \textbf{Random forest:} We use the default RandomForestClassifier in Scikit-learn, which generates 10 trees in the forest. For each tree, its hyperparameter is the same with the decision tree classifier above.
    \item \textbf{Logistic Regression and Linear regression:} We use the default LinearRegression and LogsticRegression from Scikit-learn.
    \item \textbf{Lasso regression:} We set $\alpha=0.1$ in   lasso regressor.
\end{itemize}

}
% \appendix
% \input{appendix}
%\input{alt-defs}
%\input{NegResults}
\end{document}